\newcommand*{\addFileDependency}[1]{
  \typeout{(#1)}
  \@addtofilelist{#1}
  \IfFileExists{#1}{}{\typeout{No file #1.}}
}
\newcommand{\Yobs}{Y}
\newcommand{\Ypot}{Y}
\newcommand{\Xpi}{\phi(X)}
\newcommand{\kcdte}{\operatorname{CDTE}}
\theoremstyle{plain}
\newtheorem{theorem}{Theorem}
\newtheorem{corollary}[theorem]{Corollary}
\theoremstyle{definition}
\newtheorem{assumption}{Assumption}
\newtheorem{definition}{Definition}
\newtheorem{remark}{Remark}
\Crefname{assumption}{Assumption}{Assumptions}
\begin{document}

\twocolumn[

\aistatstitle{Robust and Agnostic Learning of Conditional Distributional Treatment Effects}

\aistatsauthor{ Nathan Kallus \And Miruna Oprescu}

\aistatsaddress{Cornell University and  Cornell Tech } ]

\begin{abstract}

 The conditional average treatment effect (CATE) is the best measure of individual causal effects given baseline covariates.  
 However, the CATE only captures the (conditional) average, and can overlook risks and tail events, which are important to treatment choice.
 In aggregate analyses, this is usually addressed by measuring the distributional treatment effect (DTE), such as differences in quantiles or tail expectations between treatment groups. 
 Hypothetically, one can similarly fit conditional quantile regressions in each treatment group and take their difference, but this would not be robust to misspecification or provide agnostic best-in-class predictions. 
 We provide a new robust and model-agnostic methodology for learning the conditional DTE (CDTE) for a class of problems that includes conditional quantile treatment effects, conditional super-quantile treatment effects, and conditional treatment effects on coherent risk measures given by $f$-divergences. Our method is based on constructing a special pseudo-outcome and regressing it on covariates using any regression learner. Our method is model-agnostic in 
 that it can provide the best projection of CDTE onto the regression model class. 
 Our method is robust in that even if we learn these nuisances nonparametrically at very slow rates, we can still learn CDTEs at rates that depend on the class complexity and even conduct inferences on linear projections of CDTEs. We investigate the behavior of our proposal in simulations, as well as in a case study of 401(k) eligibility effects on wealth.

\end{abstract}

\section{INTRODUCTION}

Measuring treatment-effect heterogeneity along observed covariates is an important tool for interpreting the results of A/B tests on online platforms, program evaluations in social science whether experimental or observational, and clinical trials in medicine.
These analyses can help diagnose how the treatment works, understand for whom it does and does not work, and assess fairness \citep{heckman1997making, crump2008nonparametric, kent2010assessing, kallus2022treatment}.
On the other hand, measuring distributional treatment effects (DTEs) such as quantile treatment effects (QTEs) is an important tool for understanding the impact of interventions beyond the mean, especially when outcomes are naturally very skewed, like income or platform usage \citep{bitler2006mean, firpo, belloni, ldml}.

Motivated by the need to assess \emph{both} heterogeneity \emph{and} distributional impact, in this paper we study flexible, agnostic, and robust machine learning tools to estimate \emph{conditional} DTE (CDTE) functions.
Recent advances in causal machine learning have offered new methods for assessing effect heterogeneity by learning conditional \emph{average} treatment effects (CATEs) \citep{slearner,causaltree, causalforest, kunzel2019metalearners,kennedy2020optimal,nie2021quasi}. These works have highlighted the importance of learning CATEs \emph{directly}, rather than learning conditional-average outcomes by treatment arm and taking their difference (also known as the plug-in approach). One issue with the plug-in approach is that it can wash out the effect signal (not robust): \eg, many variables strongly predict baselines but only a few modulate the effect.
Another issue is that it fails to give best-in-class predictions (not agnostic): \eg, taking the difference of the best linear predictions of outcome by arm does not yield the best linear prediction of treatment effect.

We tackle the same challenges for CDTEs. We consider CDTEs for a very rich class of distributional metrics that includes quantiles, super-quantiles, and other coherent risk measures. Given any distributional metric in our class, we construct a pseudo-outcome that combines an initial guess for the CDTE along with a debiasing term. Our algorithm is then to regress this pseudo-outcome on covariates, using any given blackbox learner. In the case of the CATE, our method recovers the DR-Learner \citep{kennedy2020optimal}.
We show that this procedure is robust in the sense that the blackbox regression mimics having used the pseudo-outcome with the \emph{true} CDTE as the ``initial guess'', thus removing any bias or noise from fitting the baselines. We further show that our method is model-agnostic in the sense that, if the blackbox is not a universal approximator, we still get the best approximation to the CDTE function offered by the blackbox.
Lastly, we show that our estimating procedure allows for valid statistical inference on the best linear projection of CDTE, thus enabling interpretable analyses of distributional effect heterogeneity. We demonstrate in a comprehensive simulation study that we obtain uniformly better performance than the plug-in approach for several types of CDTEs. Finally, we apply our method on a real world study of 401(k) eligibility and its impact on financial wealth. 

\section{PROBLEM SETUP}

We consider either an experimental or observational dataset with two treatments, denoted by $0$ and $1$. Each unit in the dataset is a draw from a population of baseline covariates $X\in\Xcal$, treatment indicator $A\in\{0,1\}$, and observed outcome $\Yobs\in\RR$. The dataset consists of $n$ such independent draws, $Z_i=(X_i,A_i,Y_i)\sim Z=(X,A,Y),\,i=1,\dots,n$. We define the propensity score as $e^*(X)=\Prb{A=1\mid X}$. We assume throughout that $e^*(X)\in(0,1)$ almost surely, known as overlap.

Each unit is additionally associated with two unobserved potential outcomes, $\Ypot(0),\,\Ypot(1)\in\Rl$, representing the potential outcome we would observe if (possibly counter to fact) each treatment were applied. We assume we observe the potential outcome corresponding to the treatment indicator, $\Yobs=\Ypot(A)$, which also encapsulates an assumption of no interference between unit treatments. We assume unconfoundedness (ignorability) throughout: $\Ypot(a)\indep A\mid X$. For experimental data this is ensured by design via random assignment of $A$ (often with covariate-agnostic assignment, $A\indep X$). For observational data, this is an assumption that all potential sources of confounding are captured in $X$. For our purposes, the only difference between the two cases is whether the propensity score $e^*(X)$ is known.

We are interested in the differences between the conditional distributions of $\Ypot(1)$ and $\Ypot(0)$, given $X$. In the next section, we describe specific metrics for these differences.

\textbf{Notation.}
Given a distribution $F$, we define $\E_F[f(Z)]=\int f(z)dF(z)$.
We let $\widehat\E_n$ denote the empirical expectation $\widehat\E_nf(Z)=\frac{1}{n}\sum_{i=1}^n f(Z_i)$. 
For a parameter $f$, we reserve $f^*$ to represent its true value and $\widehat{f}$ a value learned from the data. We let $\|f\|:=\EE_F[f(z)^2]^{1/2}$ be the $L_2$ norm of $f$. We use $D$ to denote directional derivatives: $D_hF(h)\lvert_{h=h'}=\frac{\partial}{\partial a}F(h'+a)\lvert_{a=0}$, whenever this exists. If $h$ is a vector of functions, $D_hF(h)=(D_{h_1}F(h), ..., D_{h_k}F(h))$ and $D_{h_i}F(h)\lvert_{h=h'}=\frac{\partial}{\partial a}F((h'_1, ..., h'_i+a,...,h'_k))\lvert_{a=0}$. We let $\text{conv}(\mathcal S)$ denote the convex hull of $\mathcal S$. For two numbers $a, b$, we take $a\lesssim b$ to mean $a\leq Cb$ for some universal constant $C$ and $a\asymp b$ to mean $cb\leq a\leq Cb$ for some constants $c$ and $C$. Finally, we let $\overline{1, n}$ denote the set of integers $\{1,...,n\}$.

\section{CONDITIONAL DISTRIBUTIONAL TREATMENT EFFECTS}\label{cdte-sec}

CDTEs are functions mapping $x$ to a difference in some statistic of the conditional distributions of $\Ypot(1)$ and $\Ypot(0)$, given $X=x$. Examples of such statistics are the mean, yielding the CATE, and the $\tau$-quantile, yielding the CQTE. 
In this paper, we handle a very wide range of CDTEs given by statistics defined by \emph{moment equations} \citep{chamberlain1992efficiency, ai2003efficient}.

\begin{definition}[Moment Statistics]\label{def:statistic}
Given $\rho:\RR^{m+2}\to\RR^{m+1}$, we define a statistic of a distribution $F$ on $\RR$ as $\kappa^*(F)$ for $(\kappa^*(F),\,h^*(F))\in\RR\times\RR^m$ any solution (if it exists) to the moment equation
\begin{equation}\label{eq:kappamoment}
\E_F[\rho(Y,\kappa,h)]=0.
\end{equation}
\end{definition}

\begin{definition}[CDTEs]
Let $F_{Y(1)\mid X}$ and $F_{Y(0)\mid X}$ denote the conditional distributions of $Y(1)$ and $Y(0)$ given $X$, respectively.
Fix a statistic $\kappa^*(F)$ given by \cref{def:statistic}.
The corresponding CDTE is given by:
\begin{equation}
\kcdte(X)=
\kappa^*(F_{Y(1)\mid X})-\kappa^*(F_{Y(0)\mid X}).
\end{equation}
\end{definition}

For brevity, we will define the following functions (scalar-valued and $\R m$-valued, respectively)
\begin{align*}
\kappa^*_a(X)=\kappa^*(F_{Y(a)\mid X}), \hspace{1.5mm} h^*_a(X)=h^*(F_{Y(a)\mid X}), \hspace{1.5mm} a=0,1.
\end{align*}
We also assume these functions exist in that at least one solution to \cref{eq:kappamoment} exists for $F_{Y(a)\mid X}$ 
(see \cref{remark:multiplesolutions} regarding multiplicity).
In the examples below, we give specific names for $\kappa$ or $h$ (\eg, $q(F;\tau)$ for the $\tau$-quantile of $F$), in which case we use analogous abbreviations (\eg, $q_a(X;\tau)$).

Note that under unconfoundedness and overlap, $F_{Y(a)\mid X}$ is the same as $F_{Y\mid X,A=a}$, the conditional distribution of $Y$ given $X$ and $A=a$. Therefore, $\kappa^*_a$, $h_a^*$, and the CDTE are all identifiable from the data.  That is, despite being defined in terms of potential outcomes, the $\kcdte$ depends only on the distribution of observed data $(X,A,Y)$ and not on that of the unobserved data $(X, \Ypot(0), \Ypot(1))$, under our assumptions. The question we address in this paper is \textit{how} to learn $\kcdte$s from data.

We next review some important examples of CDTEs that fit into our framework. First note that CATE fits into this setup by setting $\rho(y,\kappa)=y-\kappa$ with $m=0$ in \cref{eq:kappamoment} (no $h$'s). The power of our framework is that it captures many CDTEs of interest beyond averages.

\subsection{Example 1: Conditional Quantile Treatment Effects}\label{sec:cqte}

Our first example is the \textbf{conditional quantile treatment effect} (CQTE). Given $\tau\in(0,1)$, the quantile at level $\tau$ of the distribution $F$ is defined as $q(F;\tau)=\inf\{y:F(y)\geq \tau\}$ (identifying $F$ with its cumulative distribution function).
Whenever $F$ has a positive derivative at $q(F;\tau)$, it is given by \cref{def:statistic} with $m=0$ (no $h$'s) and 
\begin{equation} \label{eq:cqte-moment}
    \rho(y,q)=\tau-\II[y\leq q].
\end{equation}

The corresponding CDTE is called the CQTE at level $\tau$.

Non-conditional QTEs are an important tool for quantifying the effects of treatments throughout the outcome distribution \citep{firpo, belloni, ldml}. This is especially important when we suspect that the outcome distribution might be skewed or heavy-tailed (\eg, income).

CQTEs offer an opportunity to assess such effects at the individual level. In particular, the CQTE can capture the potential increase in an individual's chance to have very poor outcomes due to treatment, even if the average effects are good or neutral. That is, if $A=1$ denotes an intervention, then CQTEs answer the prediction question: given an individual's covariates, how would intervening affect the 10\%-worst possible outcomes.

A related quantity is the difference between the $\tau$ and $1-\tau$ conditional quantiles across treatments: $\omega(X;\tau)=q(F_{Y(1)\mid X};\tau)-q(F_{Y(0)\mid X};1-\tau)$. This quantity can bound the (unobservable) individual treatment effect: 
$$\ts\Prb{\omega(X;\tau)\leq Y(1)-Y(0)\leq \omega(X;1-\tau)\mid X}\geq 1-4\tau.$$ 
For the sake of brevity and uniform treatment we focus on CDTEs (\ie, using the same statistic for both potential outcome distributions), but our results readily extend to quantities like this as well.

\subsection{Example 2: Conditional Super-Quantile Treatment Effects}\label{sec:csqte}

Our second example is the \textbf{conditional super-quantile treatment effect} (CSQTE). Given $\tau\in(0,1)$, the super-quantile (also known as conditional value-at-risk or tail expectation) at level $\tau$ of a distribution $F$ is defined as
\begin{align}\label{eq:csqte-opt}
\mu(F;\tau)&=\ts{\textstyle\inf_\beta}~\beta+\frac1{1-\tau}\int_{\beta}^\infty(1-F(y))\;dy\\
&={\textstyle\inf_\beta}~\EE_F[\beta+(1-\tau)^{-1}\max\{y-\beta,0\}].\tag*{}
\end{align}
The corresponding CDTE is called the CSQTE at level $\tau$.

Note that a $\beta$ realizing the above infimization is the quantile at level $\tau$, $q(F;\tau)$. Therefore, given positive density at the quantile, the super-quantile is given in \cref{def:statistic} by setting $m=1$ and
\begin{equation}\label{eq:csqte-moment}
\rho(y,\mu,q)=((1-\tau)^{-1}y\II[y\geq q]-\mu,\,\tau-\II[y\leq q]).
\end{equation} The super-quantile $\mu(F;\tau)$ is the largest-possible subpopulation average among all subpopulations comprising a $1-\tau$ fraction of the population of values described by $F$.
When $F(q(F;\tau))=\tau$, this can be phrased as the average of values above the $\tau$ quantile.
If we are interested in the left tail (average below a quantile), we can simply consider the the negative CSQTE in the negative outcomes.
Unlike the quantile, the super-quantile is a coherent risk measure \citep{artzner1999coherent}.

In particular, while the CQTE captures the impact on the outcomes at a single probability level, they can fail to provide a full picture of the risk profile as they are indifferent to anything beyond the threshold of the quantile. In contrast, the CSQTE captures effects beyond quantile breakpoint and quantifies the impact on the \emph{average}, say, 10\%-worst (or, best) outcomes. 
Crucially, since super-quantiles are a coherent risk measure, making individual decisions based on CSQTEs (\eg, intervene when the CSQTE is positive) is rational with respect to the coherent-risk axioms.

\subsection{Example 3: Conditional $f$-Risk Treatment Effects}\label{sec:cfrte}

Our third example is a whole class of coherent risk measures. Coherent risk measures quantify how bad/good a random loss/reward is while satisfying certain axioms (monotonicity, sub-additivity, homogeneity, and translational invariance). 
A key result is that coherent risk measures are equivalent to distributionally robust optimization \citep{ruszczynski2006optimization}. In this section, we focus on the \textbf{conditional $f$-risk treatment effect} (C$f$RTE), a family of coherent risk measures generated by $f$-divergences. 

Given a convex $f:\Rl\to\Rl$ with $f(1)=0$, we define the conditional $f$-risk at level $\delta\geq0$ as: 
\begin{align*}\ts
    &R^f(F;\delta) = \ts\sup_{G\ll F,\,D^f(G\|F)\leq \delta} \EE_{G}[Y],
\end{align*}
where $D^f(G\|F)=\EE_F[f(dG/dF)]$. For example, the $f$-divergence for $f(x)=x\log x$ is the Kullback–Leibler divergence, and the $f$-risk is known as entropic value-at-risk (EVaR) which upper bounds the super-quantile at level $1-e^{-\delta}$ \citep{ahmadi2012entropic}.
The $f$-risk admits a dual formulation given by the following convex optimization problem \citep{rockafellar1974conjugate}:
\begin{align*}
&\ts R^f(F;\delta)  = \inf_{\beta\geq0, \lambda\in \RR}~ \EE_F\left[m(Y,\beta,\lambda;\delta)\right],\\
& m(y,\beta,\lambda;\delta)=\delta \beta + \lambda + \beta f^*\left(\beta^{-1}\prns{y-\lambda}\right),
\end{align*}
where $f^*(x^*)=\sup_{x\in\RR}~xx^*-f(x)$ is the convex conjugate of $f$.

Therefore, under appropriate regularity, $R^f(F;\delta)$ is given by \cref{def:statistic} with
\begin{align}
\rho(y,R,\beta,\lambda)=&\ts\big(m(y,\beta,\lambda;\delta)-R^f,\,\frac{\partial }{\partial\beta}m(Z,\beta,\lambda;\delta),\tag*{}\\
&\;\ts\frac{\partial }{\partial\lambda}m(Z,\beta,\lambda;\delta)\big).
\end{align} 
The corresponding CDTE is called the C$f$RTE at level $\delta$. 
Frequently employed in actuarial science and finance, coherent risk measures are a fundamental tool for assessing risk. C$f$RTE can therefore be used to perform risk-benefit analyses on the treatment effect profiles of affected groups. 

\begin{figure}[t]
    \centering
    \includegraphics[width=\linewidth]{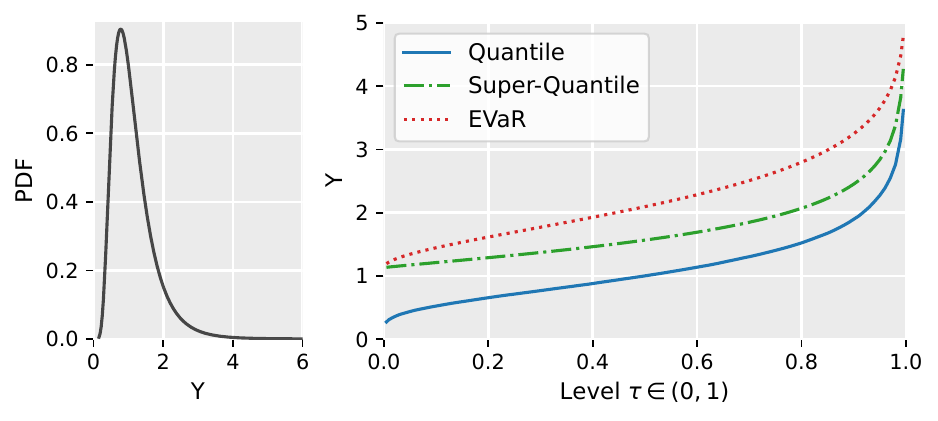}
    \vspace{-1.5em}
    \caption{Comparison of quantiles, super-quantiles, and EVaRs for a right-truncated ($Y\leq 6$) $\text{Lognormal}(\mu=0,$ $\;\sigma=0.5)$ at different risk levels $\tau\in(0, 1)$. \textit{Note:} Level $\tau$ corresponds to $\delta=-\log (1-\tau)$ for EVaR.}
    \label{fig:lognormal-risk}
     \vspace{-1em}
\end{figure}

\begin{remark}
    A natural question is which risk measure to use in practice. Ultimately, the appropriate risk measure (and level) will be application-dependent and it is up to the practitioner to select a measure that best reflects the desired risk profile. For illustration purposes, in \Cref{fig:lognormal-risk} we show how our three examples (quantiles, super-quantiles and EVaRs) compare for a heavy-tailed distribution.
\end{remark}

\section{RELATED LITERATURE}

\textbf{Learning CATEs.} CATE estimation is a central problem in causal learning and finds uses in both the analysis of causal interventions and decision support for personalization. Going beyond fully-parametric models, early advances relied on semiparametric models that imposed structure on the CATE function \citep{robins1992estimating, van2003unified, vansteelandt2014structural}. Recently there has been a surge of interest in leveraging machine learning for CATE estimation. These flexible methods either employ specific machine learning models such as Bayesian regression trees \citep{hill2011bayesian, hahn2020bayesian}, random forests (RFs) \citep{causalforest, oprescu2019orthogonal}, neural networks \citep{johansson2016learning, atan2018deep, shi2019adapting}, or allow for arbitrary blackbox meta-learners \citep{kunzel2019metalearners,nie2021quasi} by leveraging efficient influence functions \citep{robins2017minimax, kennedy2020optimal, curth2020estimating} and Neyman orthogonality \citep{chernozhukov2018double, foster2019orthogonal}. This paper is closest to the works on efficient influence functions and blackbox meta-learners, and we add to this literature by considering distributional statistics beyond averages. 

\textbf{Double machine learning and orthogonal statistical learning.} Another vein of related literature is learning with nuisances. Both our work and the above methods based on regressing efficient influence functions may be phrased within the wider framework of orthogonal statistical learning \citep{foster2019orthogonal}, which extends double machine learning (DML) \citep{chernozhukov2018double} from estimation to minimizing loss functions involving unknown nuisances subject to Neyman orthogonality \citep{neyman1959optimal}.
Neyman orthogonal losses arise naturally from efficient influence functions \citep{ichimura2022influence}, a fact that has been leveraged by \citet{foster2019orthogonal, kennedy2020optimal, curth2020estimating,athey2021policy} to tackle both CATE and policy learning. These methods have learning rates that adapt to the complexity of the target rather than that of the nuisances, but they largely focus on conditional averages. 
Our work builds on this line of research and is most similar to \citep{kennedy2020optimal} in that we propose nuisance agnostic CDTE estimators with guarantees beyond those given by Neyman orthogonality. 

\textbf{Distributional treatment effects.} The literature on DTEs can be split into two categories: (i) estimating cumulative distribution functions of potential outcomes and (ii) directly estimating distributional parameters of interest. In the first category, the main approach is to model conditional counterfactual distributions using distribution regression \citep{chernozhukov2013inference, chernozhukov2020network} or fully flexible approaches such as neural networks \citep{ge2020conditional, zhou2021estimating} and mean kernel embeddings \citep{park2021conditional}. Since these methods rely on plug-in estimation, they can be slow and biased when concerned with a particular DTE. 
The second category focuses on estimating a particular DTE.
Existing orthogonal/efficient methods focus on unconditional DTEs \citep{firpo,belloni,ldml}. Existing methods that tackle CDTEs rely on plug-in estimation \citep{park2021conditional} or parametric methods \citep{hohberg2020treatment}.
Our work bridges the gap by proposing robust, agnostic, and flexible CDTE learning.

\section{PSEUDO-OUTCOME REGRESSION FOR CDTEs}

In this section, we propose an algorithm for learning $\kcdte$s from data. As a first step, consider the following naive, but straightforward estimation procedure: 
\begin{equation}\label{plugin-alg}
\kcdte^{\text{Plugin}}(X)=
\widehat{\kappa}_1(X)-\widehat{\kappa}_0(X).
\end{equation}
where $\widehat{\kappa}_a(\cdot)$ are estimates for $\kappa_a(\cdot)$ (see \cref{sec:nuisance-est} regarding how these may be constructed).
This estimator is known as a ``plug-in'' estimator or, in the context of CATE learning, a ``T-learner'' \citep{kunzel2019metalearners}. Unfortunately, this approach has several drawbacks. One concern is that the treatment effect signal can be easily masked by noise in the baseline predictors. In particular, while many variables may strongly predict baseline response, only a few strongly modulate effect. The effect function may often be simpler, sparser, and/or smoother than each baseline function. Therefore, the plug-in estimator can suffer from excessive bias inherent in fitting baseline estimators in high-dimensions or using flexible models. If, on the other hand, we seek to use a simple model such as a linear fit, we will find that differencing the best linear predictors of baseline outcomes does not yield the best linear predictor of effect. For these reasons, it is imperative to learn CDTEs in a direct, model-agnostic, and robust way. 

To address the short-comings of the plug-in estimator, we consider the plug-in prediction on each data point $\kcdte^{\text{Plugin}}(X_i)$, debias it using the observed action and outcome $A_i,Y_i$, and finally regress the debiased prediction on $X$ again.
Our first task is to propose an appropriate debiased pseudo-outcome for CDTE learning.
\begin{definition}[$\kcdte$ Pseudo-Outcome]\label{pseudo-def}
Fix a statistic in \cref{def:statistic}.
Let $\nu^*_a = (\kappa^*_a, h^*_a)$ and
\begin{align*}
&\alpha^*_a(X) = (J_a^*(X))_1^{-1},\\
&\text{where}~~J_a^*(X)= D_{\nu_a}\{\EE[\rho(Y, \nu_a)\mid X, A=a]\} \big\lvert_{\nu_a=\nu_a^*}
\end{align*}
provided that $(J_a^*(X))^{-1}$ exists. Here, $(J_a^*(X))^{-1}_1$ denotes the first row of the inverse Jacobian.

Given some $e,\alpha,\nu$ serving as stand-ins for $e^*,\alpha^*,\nu^*$, 
we define the $\kcdte$ pseudo-outcome by
\begin{align}\label{eq:pseudooutcome}
    \psi(Z, e, \alpha, \nu) = &\kappa_1(X) - \kappa_0(X) \\
    &\ts- \frac{A-e(X)}{e(X)(1-e(X))}
    \alpha_A(X)^T\rho(Y, \nu_A(X)). \tag*{}
\end{align}
\end{definition} 
We refer to $e^*,\alpha^*,\nu^*$ as \emph{nuisance functions}, as they are unknown functions needed to construct our pseudo-outcome.
One initial motivation for \cref{eq:pseudooutcome} is that, by iterated expectations, we have $\E[\psi(Z,e^*,\alpha^*,\nu^*)\mid X]=\kcdte(X)$. 
That is, if we had plugged in the true nuisances, then the regression of our pseudo-outcome on $X$ yields exactly what we want.
This is, however, not so surprising because if we plug in $\nu=\nu^*$, the last term in \cref{eq:pseudooutcome} just has zero conditional expectation, given $X$, so we are left with $\kappa^*_1(X) - \kappa^*_0(X)$.
The reason why \cref{eq:pseudooutcome} is special is that, as we will show, if we make small errors in the nuisances, the impact on the conditional expectation of our pseudo-outcome is even smaller, leading to robustness guarantees. This is in contrast to the plug-in approach, where errors in $\widehat\kappa_a(X)$ propagate directly to $\kcdte^{\text{Plugin}}$, which is just their difference.

The origin of \cref{eq:pseudooutcome} is that $\psi(Z,e^*,\alpha^*,\nu^*)$ is in fact the efficient influence function for the estimand $\EE[\kcdte(X)]$ (for a review of influence functions see \citet{kennedy2022semiparametric,ichimura2022influence}). In particular, if our statistic is simply the mean ($\rho(y,\kappa)=y-\kappa$) then $\psi(Z,e^*,\alpha^*,\nu^*)$ reduces to the familiar doubly-robust influence function, $\E[Y\mid X,A=1]-\E[Y\mid X,A=0]+\frac{A-e^*(X)}{e^*(X)(1-e^*(X))}(Y-\E[Y\mid X,A])$ that produces the CATE when regressed on $X$ (as studied by \citet{kennedy2020optimal}).
As we never use the fact that $\psi(Z,e^*,\alpha^*,\nu^*)$ is the efficient influence function, we do not prove this or impose the necessary regularity conditions for this to actually hold precisely. Instead, we simply use a perturbation argument to essentially guess the form of \cref{eq:pseudooutcome}, given which we directly prove our robustness and inference guarantees.

\subsection{The CDTE Learning Algorithm}

\begin{algorithm}[t]\caption{CDTE Learner}\label{cdte-alg}
\begin{algorithmic}[1]
\Require Data $\{(X_i, A_i, Y_i): i\in \overline{1, n}\}$, folds $K\geq 2$, nuisance estimators, regression learner
 \For{$k\in \overline{1, K}$} 
    \State Use data $\{(X_i, A_i, Y_i): i\neq k-1\;(\text{mod}\; K)\}$ to 
    \State construct nuisance estimates $\hat e^{(k)},\hat \alpha^{(k)},\hat \nu^{(k)}$
    \For{$i= k-1\;(\text{mod}\; K)$}
        \State Set $\widehat{\psi}_i=\psi(Z_i,\hat e^{(k)},\hat \alpha^{(k)},\hat \nu^{(k)})$ 
    \EndFor
 \EndFor
 \State\Return~$\widehat{\kcdte}(x)=\widehat{\EE}_n[\widehat{\psi}\mid X=x]$
\end{algorithmic}
\end{algorithm}

We now describe our algorithm, which is summarized in \cref{cdte-alg}. We first split the data into $K$ even folds. We then construct a pseudo-outcome for each data point by plugging in estimates of the nuisances into \cref{eq:pseudooutcome}. The nuisances at a point are fit on data excluding the fold that the data point belongs to.
This ensures that the data point and the nuisance estimates are independent without splitting the data into two and instead only using parts of it for each task.
Finally, we regress the pseudo-outcome on $X$ using a given regressor. 
We use $\widehat{\EE}_n[W\mid X=x]$ to denote the function learned by the given regression method when regressing $W$ on $X$ given $n$ data points $(X_i,W_i)$, $i\in\overline{1, n}$.
This notation affords us significant generality. For example, the regression method may be to minimize the sum of squared errors over some function class (\eg, linear or neural nets) or it may be given by local polynomial regression or RFs.

\subsection{Nuisance Estimation}\label{sec:nuisance-est}

\Cref{cdte-alg} requires nuisance estimators as inputs. Exactly how these nuisances are estimated may depend on the particular scenario. Let us first discuss the propensity $e^*(x)$. If it is known, as in experimental settings, we may simply set it as our estimate. Otherwise, we can estimate it using probabilistic classification (\eg, logistic regression or neural nets with softmax output).

Next, we discuss $\nu_a^*(x)$. Most generally, since it is defined by solving the conditional moment restriction $\EE[\rho(Y,\nu_a^*(X))\mid X]=0$, this nuisance may be learned by employing methods made for solving such models \citep{ai2003efficient,chen2009efficient, bennett2019deep, bennett2020variational,athey2019generalized,khosravi2019non, dikkala2020minimax}. However, in some specific examples, more direct methods may be applicable. For both CQTE and CSQTE, $\nu_a^*(x)$ includes a conditional quantile function, which can be learned using any quantile regression method, whether minimizing the check loss \citep{koenker1978regression} or using forests \citep{meinshausen2006quantile}. For CSQTE, we additionally need to fit the conditional super-quantile. Per \cref{eq:csqte-moment}, that nuisance is given by the regression $\E[(1-\tau)^{-1}Y\II[Y\geq q_a(X;\tau)]\mid X,A=a]$. Therefore, one possibility is to split the training data (being all data excluding the $k\thh$ fold) into two halves, fit a conditional quantile estimate $\hat q^{(k)}_a(x;\tau)$ on one, set $\omega_i=(1-\tau)^{-1}Y_i\II[Y_i\geq \hat q^{(k)}_a(X_i;\tau)]$ on the other, and return $\hat\mu^{(k)}_a(x;\tau)=\widehat\EE_{(1-1/k)n/2}[\omega\mid X=x,A=a]$. In particular, a given $X$-regression method can simply be applied once to $A=0$ and once to $A=1$. Appendix A of \citet{dorn2021doubly} provides guarantees for this procedure when the regression learner minimizes squared error over a class with bracketing entropy and \citet{olma2021nonparametric} when using local linear regression.

Lastly, we discuss $\alpha_a^*(x)$. In some cases, it is a known function that need not be estimated. 
For C$f$RTE, it is equal to $(-1,0, 0)$. 
In other cases, it is given directly by other nuisances. For CSQTE, it is equal to $(-1,(1-\tau)^{-1}q^*_a(x;\tau))$ so we can simply re-use the estimate we constructed for $\nu_a^*$.
In yet other cases, it is another nuisance that must be estimated.
For CQTE, it is equal to $1/f_{Y\mid X=x,A=a}(q^*_a(x;\tau))$, the reciprocal of the density of $Y\mid X=x,A=a$ at the conditional quantile. One way to fit this suggested by \citet{leqi2021median} is to split the training data into two halves, fit a conditional quantile estimate $\hat q^{(k)}_a(x;\tau)$ on one, set $\omega_i=K((Y_i-q^{(k)}_a(X_i;\tau))/b_n)/b_n$ on the other, where $K(u)$ is a kernel function such as the standard normal density, and return $\hat\alpha^{(k)}_a(x)=\widehat\EE_{(1-1/k)n/2}[\omega\mid X=x,A=a]$.

\section{GUARANTEES FOR LEARNING}\label{sec:learning}

In this section, we study the finite sample error rates for Algorithm \ref{cdte-alg} with arbitrary first- and second-stage estimators. For this and the next section, let us fix some $\kcdte$ with pseudo-outcome as in Definition \ref{pseudo-def} and estimation procedures input to Algorithm \ref{cdte-alg}.
We require the following boundedness conditions.

\begin{assumption}[Boundedness]\label{cdte-assum}
For a nuisance realization set $\Xi$,
there exist $c_1>0,c_2\geq0,c_3\geq0,c_4>0,c_5\geq0$ and matrices $G, H\in\{0,1\}^{(m+1)\times(m+1)}$ such that $\forall (e,\alpha,{\nu}_a)\in\Xi,\;i,j,l\in\overline{1, m+1}, \;\overline{\nu}_a\in \text{conv}\{(\nu_a^*, {\nu}_a)\}$,
\begin{itemize}[itemsep=0mm,leftmargin=*,topsep=0mm]
\item $e^*(X), e(X)\in[c_1,\,1-c_1]$
\item $\left\lvert D_{\nu_{a,j}}\EE[\rho_i (Y, \nu_a)\mid X, A=a\lvert_{\nu_a=\overline{\nu}_a} \right\rvert\leq c_2G_{ij}$
\item $\left\lvert D_{\nu_{a,l}}D_{\nu_{a,j}}\EE[\rho_i (Y, \nu_a)| X=x, A=a]\lvert_{\nu_a=\overline{\nu}_a}\right\rvert \leq c_3H_{jl}$
\item $\det\fprns{D_{\nu_a}\{\EE[\rho(Y, \nu_a)\mid X=x, A=a]\}\big\lvert_{\nu_a=\overline{\nu}_a}}> c_4$ 
\item $|\rho_i(Y, \bar\nu_a)|\leq c_5$
\end{itemize}
\end{assumption}

The first condition in \cref{cdte-assum} ensures that both treatments and controls can be observed for any $X$ with some fixed probability. This is guaranteed in a randomized trial if $e^*(X)$ is a constant and otherwise is a standard assumption in observational studies. The other conditions encode the cross-term structure of the derivatives of our moments.
In most examples, the requirement of $\rho$ being bounded amounts to requiring $Y$ to be bounded. Similar boundedness assumptions are often made in debiased machine learning for ATE and CATE to control remainder terms.

We can then prove the following \emph{conditional} Neyman orthogonality for our pseudo-outcomes, which is the key step for learning guarantees.
\begin{theorem}[Conditional Neyman Orthogonality]\label{thm:condneym}
Suppose \cref{cdte-assum} holds and let $(e,\alpha,{\nu}_a)\in\Xi$. Then, 
\begin{align}
\notag&\left\|\EE\left[\psi(Z, e , \alpha, \nu)-\psi(Z, e^*, \alpha^*, \nu^*)\mid X\right]\right\|\lesssim \mathcal{E}(e , \alpha, \nu),\\
    & \ts
    \mathcal{E}(e , \alpha, \nu)=\label{eq:E}
    \sum_{a=0}^1 \big( \|\kappa_a-\kappa^*_a\| \; \|e -e^*\|\\
    & \ts\quad\phantom{\lesssim}+\sum_{i=1}^{m+1}\sum_{j=1}^{m+1}G_{ij}\|\alpha_{a,i} - \alpha^*_{a, i}\| \;\|\nu_{a,j} - \nu^*_{a, j}\|
    \notag\\
    & \ts\quad\phantom{\lesssim} + \sum_{i=1}^{m+1}\sum_{j=1}^{m+1}H_{ij}\|\nu_{a,i} - \nu^*_{a, i}\| \;\|\nu_{a,j} - \nu^*_{a, j}\|
    \big) ,\notag
\end{align}
where the $\lesssim$ hides dependence on $c_1,c_2,c_3,c_4$.
\end{theorem}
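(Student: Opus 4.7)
The plan is to compute $\EE[\psi(Z,e,\alpha,\nu)\mid X]$ explicitly, subtract $\EE[\psi(Z,e^*,\alpha^*,\nu^*)\mid X]=\kappa^*_1(X)-\kappa^*_0(X)$, and show the difference is a sum of products of two nuisance-error terms, which then gives the stated bound by Cauchy--Schwarz. The key identity that makes all the first-order (single-error) terms cancel is $\alpha_a^{*\mathsf T}J_a^*=e_1^{\mathsf T}$, which is exactly how $\alpha_a^*$ was defined in \cref{pseudo-def}.

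\textbf{Step 1: condition on $A$.} Using unconfoundedness and iterated expectations, and writing $g_a(X)=\EE[\rho(Y,\nu_a(X))\mid X,A=a]$ (so that $g_a^*(X)=0$ by the definition of $\nu_a^*$), a short calculation splitting on $A\in\{0,1\}$ gives
\begin{align*}
\EE[\psi(Z,e,\alpha,\nu)\mid X]
=\kappa_1(X)-\kappa_0(X)
-\tfrac{e^*(X)}{e(X)}\alpha_1(X)^{\mathsf T}g_1(X)
+\tfrac{1-e^*(X)}{1-e(X)}\alpha_0(X)^{\mathsf T}g_0(X).
\end{align*}
Subtracting $\kappa^*_1(X)-\kappa^*_0(X)$ and writing $\Delta\kappa_a=\kappa_a-\kappa^*_a$, the terms involving $\Delta\kappa_a$ combine with the leading $\kappa_a-\kappa_a^*$ pieces and pick up a factor of $(e-e^*)$, so those contributions are already of the product form $\Delta\kappa_a\cdot(e-e^*)/e$ (and analogously for $a=0$ with $1-e$ in the denominator), bounded via $c_1$.

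\textbf{Step 2: Taylor-expand $g_a$ around $\nu_a^*$.} Since $g_a^*(X)=0$, a componentwise second-order Taylor expansion gives
\begin{align*}
g_a(X)=J_a^*(X)(\nu_a(X)-\nu_a^*(X))+R_a(X),
\end{align*}
where the $i$th entry of the remainder $R_a$ satisfies $|R_{a,i}(X)|\le \tfrac{c_3}{2}\sum_{j,l}H_{jl}|\nu_{a,j}-\nu_{a,j}^*||\nu_{a,l}-\nu_{a,l}^*|$ by the Hessian bound in \cref{cdte-assum}, evaluated at the intermediate point $\bar\nu_a\in\text{conv}\{(\nu_a^*,\nu_a)\}$. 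Now I write $\alpha_a^{\mathsf T}J_a^*\Delta\nu_a=(\alpha_a-\alpha_a^*)^{\mathsf T}J_a^*\Delta\nu_a+\alpha_a^{*\mathsf T}J_a^*\Delta\nu_a$. The second summand equals $e_1^{\mathsf T}\Delta\nu_a=\Delta\kappa_a$ by the defining identity $\alpha_a^{*\mathsf T}J_a^*=e_1^{\mathsf T}$. Substituting back, the ``$\Delta\kappa_a$'' pieces combine with the $\kappa_a$ pieces from Step~1 so that all terms linear in a single nuisance error cancel up to the factor $(e-e^*)/e$ (or $(e-e^*)/(1-e)$) already present.

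\textbf{Step 3: assemble the bound.} What remains, after the cancellation, is a sum of the following three types of terms (one for each $a\in\{0,1\}$): (i) $\Delta\kappa_a$ multiplied by $(e-e^*)$ divided by a quantity bounded below by $c_1(1-c_1)$; (ii) $(\alpha_a-\alpha_a^*)^{\mathsf T}J_a^*\,\Delta\nu_a$, whose absolute value is at most $c_2\sum_{i,j}G_{ij}|\alpha_{a,i}-\alpha_{a,i}^*|\,|\nu_{a,j}-\nu_{a,j}^*|$ using the entrywise derivative bound from \cref{cdte-assum}; and (iii) $\alpha_a^{\mathsf T}R_a$, which is bounded by $c_3\|\alpha_a\|_\infty\sum_{j,l}H_{jl}|\nu_{a,j}-\nu_{a,j}^*||\nu_{a,l}-\nu_{a,l}^*|$, where $\|\alpha_a\|_\infty$ is uniformly bounded because $\alpha_a=(J_a)_1^{-1}$ and the entries of $J_a$ are bounded by $c_2$ while its determinant is bounded away from zero by $c_4$ (so Cramer's rule gives a uniform $L^\infty$ bound on $\alpha_a$). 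Taking the $L^2$ norm over $X$ and applying Cauchy--Schwarz termwise converts each of (i)--(iii) into a product of $L^2$ norms, yielding exactly the bound $\mathcal E(e,\alpha,\nu)$ in \cref{eq:E}, with the hidden constant depending only on $c_1,c_2,c_3,c_4$.

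\textbf{Main obstacle.} The bookkeeping in Step~2 is the delicate part: one has to simultaneously exploit the propensity reweighting (which produces the $(e-e^*)$ factor multiplying $\Delta\kappa_a$) and the Jacobian cancellation $\alpha_a^{*\mathsf T}J_a^*=e_1^{\mathsf T}$ (which removes the $\Delta\nu_a$ first-order term) so that \emph{every} surviving term is a product of two nuisance errors. Getting the signs right and verifying that no genuinely first-order term leaks through is the heart of the argument; once this is done, the matrices $G,H$ in \cref{cdte-assum} are precisely engineered so that the crude triangle-inequality bounds on (ii) and (iii) reproduce the sums in \cref{eq:E}.
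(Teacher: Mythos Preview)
Your proof follows essentially the same route as the paper's: compute the conditional mean of $\psi$, Taylor-expand $g_a(X)=\EE[\rho(Y,\nu_a(X))\mid X,A=a]$ about $\nu_a^*$, and use the defining identity $\alpha_a^{*\mathsf T}J_a^*=e_1^{\mathsf T}$ to cancel the first-order piece so that only the products in \cref{eq:E} survive. The paper's decomposition differs only in the order of operations: it first writes $\alpha_a^{\mathsf T}g_a=(\alpha_a-\alpha_a^*)^{\mathsf T}g_a+\alpha_a^{*\mathsf T}g_a$ and \emph{then} Taylor-expands, using a first-order Lagrange expansion on the first summand and a second-order one on the second. In your version you expand first and split $\alpha_a$ afterward, which leaves the second-order remainder multiplied by $\alpha_a$ rather than $\alpha_a^*$.

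That ordering creates one small gap in your Step~3(iii): you bound $\alpha_a^{\mathsf T}R_a$ by asserting $\|\alpha_a\|_\infty<\infty$ ``because $\alpha_a=(J_a)_1^{-1}$''. But in the theorem $\alpha$ is a generic element of $\Xi$, not necessarily the first row of an inverse Jacobian at the estimated $\nu_a$, and nothing in \cref{cdte-assum} bounds $\alpha_a$ directly. The paper avoids this by only ever needing $\|\alpha_a^*\|_\infty$, which \emph{is} controlled by $c_2$ and $c_4$ via Cramer's rule since $\alpha_a^*=(J_a^*)_1^{-1}$. The fix is trivial: adopt the paper's ordering (split off $\alpha_a-\alpha_a^*$ before expanding $g_a$), and then your terms (i)--(iii) line up exactly with the paper's $\Lambda_1$, $\Lambda_2$ analysis and the rest of your argument goes through unchanged.
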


The result shows that whether we use the pseudo-outcome with oracle nuisances or estimated nuisances as a regression target, the difference is bounded by a \emph{quadratic} form in the nuisance errors, wherein $G,H$ govern which pairwise error products appear. Thus, even if nuisances are estimated slowly, the impact is marginal, as the error is squared.

We will show that, up to $\mathcal{E}=\sum_{k=1}^K\mathcal{E}(\hat e^{(k)},\hat \alpha^{(k)},\hat \nu^{(k)})$, our estimate $\widehat{\kcdte}$ produced by \cref{cdte-alg} behaves the same as if we applied our regression method to the pseudo-outcome with oracle nuisances, $\widetilde{\kcdte}(x)=\widehat{E}_n[\psi(Z,e^*,\alpha^*, \nu^*)\mid X=x]$. In particular, if $\mathcal E$ is generally smaller than the error $\|\widetilde{\kcdte}-\kcdte\|$, then the leading behavior of $\widehat{\kcdte}$ will be equivalent to that of $\widetilde{\kcdte}$. The particular statement of this will depend on what last-stage regression method we are using.

The significance is that \cref{cdte-alg} will behave like regressing an unbiased observation of $\kcdte$ on $X$, even though we used estimated nuisances.
First, this means we can learn $\kcdte$ at rates that match the complexity of that function. 
Second, this implies that we can directly approximate $\kcdte$ and get model-agnostic best-in-class guarantees. For example, if we use linear regression, we would get the \emph{best} linear approximation for $\kcdte$ at a rate of $n^{-1/2}$. This is especially important if we seek an interpretable model. In the next section, we show this also enables \emph{inference}.

\begin{remark}
As shown in Appendix \ref{sec:thm-app},
in many examples, \cref{eq:E} will contain only a few of the product terms, because many $G, H$ entries are $0$ and/or $\widehat{\alpha}(x, \widehat{\nu})=\alpha^*(x, \widehat{\nu})$. This enables us to trade off slower rates in one nuisance for faster rates in another while maintaining the same rate for $\mathcal E$.
\end{remark}
\begin{remark}\label{remark:multiplesolutions}
We never explicitly assume that the conditional moment restrictions identify $\nu^*_a(x)$ uniquely, only that they exist.
While uniqueness is not necessary, the right-hand side of \cref{eq:E} will not vanish unless $\widehat\nu\s k_a(x)$ converges to a \emph{single}, \emph{non-random} limit point $\nu^*_a(x)$. This is certainly possible even under solution multiplicity \citep{imbens2021controlling,kallus2022debiased}. At the same time, usually $\nu^*_a(x)$ will be unique under very mild assumptions such as having continuous distributions.
\end{remark}

First, we consider an empirical risk minimization algorithm (nonparametric least squares): given a class $\Fcal\subset[\Xcal\to\Rl]$,
\begin{align}\label{eq:erm}\ts
\widehat{\EE}_n[W\mid X=\cdot]\in\argmin_{f\in\mathcal F}\widehat\E_n(W-f(X))^2.
\end{align}

\begin{theorem}\label{thm:erm}
Suppose \cref{cdte-assum} holds and almost surely $(\hat e^{(k)},\hat \alpha^{(k)},\hat \nu^{(k)})\in\Xi$ for $k\in\overline{1, K}$.
Let $\widehat{\EE}_n[\cdot\mid X=x]$ be as in \cref{eq:erm} and suppose $\Fcal$ is convex, closed and has bracketing entropy $\log N_{[]}(\Fcal,\epsilon)\lesssim \epsilon^{-r}$ with $0<r<2$ and that $\abs{f(x)}\leq c_5\,\forall f\in\Fcal,x\in\Xcal$. Then,
\begin{align}
\ts\|\widehat{\kcdte}-\kcdte\|  \lesssim O_p(n^{-1/(2+r)})+\mathcal E.
\end{align}
\end{theorem}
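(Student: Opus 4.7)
The plan is to compare $\widehat{\kcdte}$ to the oracle-nuisance counterpart
$$\widetilde{\kcdte}(x)=\argmin_{f\in\mathcal F}\widehat\E_n\bigl(\psi(Z,e^*,\alpha^*,\nu^*)-f(X)\bigr)^2,$$
and bound the two pieces $\|\widetilde\kcdte-\kcdte\|$ and $\|\widehat\kcdte-\widetilde\kcdte\|$ separately. The first falls out of off-the-shelf nonparametric least-squares theory; the second is driven entirely by the conditional Neyman orthogonality of \cref{thm:condneym}.

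For the oracle gap, iterated expectations give $\E[\psi(Z,e^*,\alpha^*,\nu^*)\mid X]=\kcdte(X)$, and \cref{cdte-assum} together with the envelope $|f|\leq c_5$ on $\mathcal F$ guarantees uniformly bounded regression residuals. The standard rate for least squares over a convex, closed class with bracketing entropy $\epsilon^{-r}$ (e.g., van de Geer, or van der Vaart \& Wellner Theorem 3.4.1) then yields $\|\widetilde\kcdte-\kcdte\|=O_p(n^{-1/(2+r)})$.

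For the nuisance gap, cross-fitting ensures that within fold $k$ the increments $\Delta_i=\widehat\psi_i-\psi(Z_i,e^*,\alpha^*,\nu^*)$ are i.i.d.\ conditional on the held-out data with $\E[\Delta_i\mid X_i,\text{held-out}]=b^{(k)}(X_i)$ and $\|b^{(k)}\|\lesssim\mathcal E(\hat e^{(k)},\hat\alpha^{(k)},\hat\nu^{(k)})$ by \cref{thm:condneym}. Convexity of $\mathcal F$ and the first-order optimality of both ERMs give, upon taking $g=\widetilde\kcdte$ and $g=\widehat\kcdte$ as feasible test directions and summing,
$$\|\widehat\kcdte-\widetilde\kcdte\|_n^2\leq\langle\widehat\kcdte-\widetilde\kcdte,\Delta\rangle_n.$$
Decomposing $\Delta=b+(\Delta-b)$, the conditionally deterministic piece is controlled by Cauchy--Schwarz as $\|\widehat\kcdte-\widetilde\kcdte\|\cdot\mathcal E$ after passing from the empirical to the population $L_2$ norm, while the centered piece is a localized empirical process over $\mathcal F-\mathcal F$ whose $\delta$-modulus of continuity is $\lesssim\delta^{1-r/2}n^{-1/2}$ under the bracketing entropy bound.

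The main obstacle is this final empirical-process localization. One must (i) transfer from $\widehat\E_n$ to $\E$ uniformly over the shifted class $\mathcal F-\mathcal F$, (ii) run a peeling argument at the conjectured rate $n^{-1/(2+r)}$ so the centered cross term contributes no more than the oracle rate, and (iii) handle the randomness of $b$ by conditioning on the held-out folds throughout (so the orthogonality bound of \cref{thm:condneym} applies pointwise in the cross-fitting randomness). Each step is standard in the orthogonal statistical learning literature (cf.\ \citet{kennedy2020optimal, foster2019orthogonal}), and together they yield $\|\widehat\kcdte-\widetilde\kcdte\|\lesssim O_p(n^{-1/(2+r)})+\mathcal E$. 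A triangle inequality with the oracle-gap bound completes the proof.
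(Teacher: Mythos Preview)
Your argument is workable, but the paper chooses a different intermediate comparison point that makes the proof shorter. Instead of the oracle ERM $\widetilde{\kcdte}$, the paper compares $\widehat{\kcdte}$ to
\[
\bar f \;=\; \Pi_{\mathcal F}\bigl(\E[\hat\psi\mid X]\bigr),
\]
the \emph{population} $L_2$-projection onto $\mathcal F$ of the conditional mean of the \emph{estimated} pseudo-outcome. With this choice, the nuisance error is isolated in a single step: since $\Pi_{\mathcal F}$ is non-expansive on a closed convex set,
\[
\|\bar f-\kcdte\|=\|\Pi_{\mathcal F}(\E[\hat\psi\mid X])-\Pi_{\mathcal F}(\E[\psi\mid X])\|\le\|\E[\hat\psi\mid X]-\E[\psi\mid X]\|\lesssim\mathcal E
\]
by \cref{thm:condneym}. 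The remaining piece $\|\widehat{\kcdte}-\bar f\|$ is then the distance between an ERM and the minimizer of its own population risk (both with target $\hat\psi$), so a standard peeling plus maximal-inequality argument, applied fold-by-fold conditional on the held-out data, gives $O_p(n^{-1/(2+r)})$ with no interaction between estimated and oracle pseudo-outcomes.

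Your route---first-order conditions at both ERMs yielding $\|h\|_n^2\le\langle h,\Delta\rangle_n$, then splitting $\Delta$ into a bias $b$ controlled by \cref{thm:condneym} and a centered empirical process over $\mathcal F-\mathcal F$---is the Foster--Syrgkanis style analysis and is correct in outline. It buys nothing here over the paper's argument, though: you must still run the localized empirical-process bound (your steps (i)--(ii)), and in addition you must transfer $\|\cdot\|_n\to\|\cdot\|$ and solve a self-referential inequality mixing the $\mathcal E$ term and the entropy-integral term. The paper avoids all of this by pushing the nuisance error into a deterministic projection step, leaving a pure ERM analysis on one side.
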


The rate $O_p(n^{-1/(2+r)})$ is generally the rate for regressing a \emph{known} target using nonparametric least squares over $\Fcal$ with such bracketing entropy. For example, if $\Fcal$ is H\"older functions of smoothness $\beta$ in $d$-dimensional inputs then it satisfies the entropy condition with $r=d/\beta$ \citep[cor. 2.7.2]{vdv_wellner} and $n^{-\beta/(2\beta+d)}$ is the optimal rate for regression in such a class \citep{stone1982optimal}. Thus, if $\mathcal E=O_p(n^{-1/(2+r)})$, our regression behaves as though we supplied it with the oracle nuisances.

Obtaining such a rate for $\mathcal E$ is generally lax because it is \emph{quadratic} in nuisance-estimation errors. 
For example, if nuisances are estimated at the much slower rate $O_p(n^{-1/(4+2r)})$, the condition is ensured. The special structure in \cref{thm:condneym} further permits some trade-off between the rates of different nuisances. In Appendix \ref{sec:thm-app}, we give the pseudo-outcome for each of the examples in Section \ref{cdte-rates} and instantiate \cref{thm:condneym} to exactly characterize the trade-offs.

Going beyond empirical risk minimizers, leveraging \cref{thm:condneym} and invoking a result of \cite{kennedy2020optimal}
\footnote{The result appears as Theorem 1 in v2 of the arxiv preprint.}
we can characterize the behavior when using a last-stage regression method satisfying certain stability properties.

\begin{theorem}\label{cdte-rates} 
Suppose \cref{cdte-assum} holds and almost surely for $(\hat e^{(k)},\hat \alpha^{(k)},\hat \nu^{(k)})\in\Xi$ for $k\in\overline{1, K}$, and that, for any targets $Y$ and $W$,
        $\widehat{\EE}_n[Y\mid X=x]+c = \widehat{\EE}_n[Y+c\mid X=x]$
        and 
        $\|\widehat{\EE}_n[W\mid X] - \EE[W\mid X]\| \asymp \|\widehat{\EE}_n[Y\mid X] - \EE[Y\mid X]\|$ whenever $\EE[Y\mid X=x] = \EE[W\mid X=x]$.
    Then,
\begin{align}\label{eq:cdte-rates}
    &\ts\|\widehat{\kcdte}-\kcdte\|  \lesssim \|\widetilde{\kcdte}-\kcdte\| + \mathcal E.
\end{align}
\end{theorem}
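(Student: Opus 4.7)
Write $\psi^*:=\psi(Z,e^*,\alpha^*,\nu^*)$ for the oracle pseudo-outcome and $\widehat\psi$ for the estimated pseudo-outcome used in \cref{cdte-alg}. The strategy is to reduce the comparison between $\widehat{\kcdte}$ and $\kcdte$ to the comparison between the oracle regression $\widetilde{\kcdte}$ and $\kcdte$, paying only a bias term controlled by $\mathcal{E}$ via \cref{thm:condneym}. The key observation is that, after conditioning on the cross-fit nuisance-training folds so that $\hat e^{(k)},\hat\alpha^{(k)},\hat\nu^{(k)}$ are fixed, $\widehat\psi$ and $\psi^*+b(X)$ share the same conditional mean given $X$, where $b(X):=\EE[\widehat\psi-\psi^*\mid X]$ is the conditional bias induced by nuisance error.

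First I would invoke \cref{thm:condneym} to control $b$: under \cref{cdte-assum}, conditional Neyman orthogonality gives $\|b^{(k)}\|\lesssim\mathcal{E}(\hat e^{(k)},\hat\alpha^{(k)},\hat\nu^{(k)})$ on each fold, and summing over the $K$ (fixed) folds yields $\|b\|\lesssim\mathcal{E}$. Since $\EE[\widehat\psi\mid X]=\kcdte(X)+b(X)=\EE[\psi^*+b(X)\mid X]$, the stability hypothesis applied to these matched-mean targets gives
\[\big\|\widehat{\EE}_n[\widehat\psi\mid X]-(\kcdte+b)\big\|\asymp\big\|\widehat{\EE}_n[\psi^*+b\mid X]-(\kcdte+b)\big\|.\]
Invoking the translation property with offset $b$ (deterministic once we condition on the nuisance folds) yields $\widehat{\EE}_n[\psi^*+b\mid X=x]=\widetilde{\kcdte}(x)+b(x)$, so the right-hand side equals $\|\widetilde{\kcdte}-\kcdte\|$ while the left-hand side equals $\|\widehat{\kcdte}-\kcdte-b\|$.

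Combining with a triangle inequality gives
\[\|\widehat{\kcdte}-\kcdte\|\leq\|\widehat{\kcdte}-\kcdte-b\|+\|b\|\lesssim\|\widetilde{\kcdte}-\kcdte\|+\mathcal{E},\]
which is \cref{eq:cdte-rates}. Unconditioning on the nuisance folds is immediate because the bound holds almost surely. The main obstacle is the translation property: as stated the condition is phrased with a scalar $c$, yet the argument above requires applying it with the conditionally-deterministic \emph{function} $b(X)$. This stronger reading is what the cited \citet{kennedy2020optimal} Theorem~1 actually uses, and it is satisfied by standard last-stage learners (linear smoothers, series estimators, forests with mean-correct leaves) once one conditions on the nuisance folds. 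A secondary bookkeeping issue is the $K$-fold cross-fit structure, handled by noting that the $k\thh$-fold pseudo-outcomes are by construction independent of the nuisances fit on its complement, which is precisely why \cref{cdte-alg} enforces sample splitting.
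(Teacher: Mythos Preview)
Your proposal is correct and follows essentially the same route as the paper. The paper's own proof is terser: it directly invokes Theorem~1 of \citet{kennedy2020optimal} as a black box to obtain $\|\widehat{\kcdte}-\kcdte\|\lesssim\|\widetilde{\kcdte}-\kcdte\|+\sum_{k}\|\EE[\hat\psi_k-\psi^*\mid X]\|$, and then applies \cref{thm:condneym} fold-by-fold to bound the latter sum by $\mathcal{E}$; your write-up simply unpacks the mechanism behind Kennedy's result (the matched-mean stability plus translation argument) and correctly flags that the scalar-$c$ translation hypothesis must be read as allowing conditionally-deterministic offsets $b(X)$, which is indeed how the cited result operates.
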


\section{GUARANTEES FOR INFERENCE}\label{sec:inference}

We next study how \cref{cdte-alg} can be used for inference on the best linear projection of $\kcdte$:
$$\ts
\gamma^*\in\argmin_{\gamma\in\RR^p}\|\kcdte-\gamma\tr\phi(\cdot)\|,
$$
for a given $\phi:\Xcal\to\RR^p$. In particular, $\phi$ may be subset just some of the features. Linear projections offer an interpretable view into distributional-effect heterogeneity. In the previous section, we showed \cref{cdte-alg} can perform well at learning linear projections. Next, we show that we can further conduct inference on the coefficients, which can facilitate interpretation and credible conclusions.

\begin{theorem}[Asymptotic Normality of Linear Projections]\label{thm:proj-normal}
Suppose \cref{cdte-assum} holds and almost surely $(\hat e^{(k)},\hat \alpha^{(k)},\hat \nu^{(k)})\in\Xi$  for $k\in\overline{1, K}$.
Let $\widehat{\gamma}$ be the coefficient vector returned by \cref{cdte-alg} when using ordinary least squares (OLS) on $\phi(X)$ as the regression blackbox for the final stage.
Furthermore, assume
    $\|\widehat{e}\s k-e^*\|=o_p(1)$,
    $\|\widehat{\kappa}_a\s k-\kappa^*_a\|\|\widehat{e}\s k-e^*\|=o_p(n^{-1/2})$, $\|\widehat{\alpha}\s k_{a,i}-\alpha^*_{a, i}\|=o_p(n^{-1/4})$, $\|\widehat{\nu}\s k_{a,i}-\nu^*_{a,i}\|=o_p(n^{-1/4})$, $\forall i \in\overline{1,m+1},\, k\in\overline{1,K}$.
Then, $\widehat{\gamma}$ satisfies
\begin{equation}
    \sqrt{n}(\widehat{\gamma} - \gamma^*)\rightsquigarrow \Ncal (0, \Sigma^*),
\end{equation}
where $\Sigma^*$ is the asymptotic covariance matrix for the linear regression of $\psi(Z,e^*,\alpha^*, \nu^*)$ on $\phi(X)$.
\end{theorem}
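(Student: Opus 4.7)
The plan is to apply the standard sandwich-formula OLS argument coupled with the conditional Neyman orthogonality from \cref{thm:condneym} and the cross-fitting in \cref{cdte-alg}. Write $\widehat\gamma = \widehat Q_n^{-1}\widehat\EE_n[\phi(X)\widehat\psi]$ with $\widehat Q_n=\widehat\EE_n[\phi(X)\phi(X)\tr]$, let $\psi^*(Z)=\psi(Z,e^*,\alpha^*,\nu^*)$, and introduce the \emph{oracle} OLS estimator $\widetilde\gamma = \widehat Q_n^{-1}\widehat\EE_n[\phi(X)\psi^*(Z)]$. Since $\EE[\psi^*\mid X]=\kcdte(X)$ and $\gamma^*$ minimizes $\|\kcdte-\gamma\tr\phi\|$, the OLS first-order condition gives $\EE[\phi(X)(\psi^*-\phi(X)\tr\gamma^*)]=0$. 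Under \cref{cdte-assum} the pseudo-outcome $\psi^*$ is bounded (because $\rho$, $1/(e^*(1-e^*))$, and $\alpha^*=(J^*)^{-1}_1$ are all bounded), so combining the LLN $\widehat Q_n\to Q:=\EE[\phi\phi\tr]$ in probability with the multivariate CLT yields $\sqrt n(\widetilde\gamma-\gamma^*)\rightsquigarrow \Ncal(0,\Sigma^*)$, with $\Sigma^*=Q^{-1}\mathrm{Var}(\phi(X)(\psi^*-\phi(X)\tr\gamma^*))Q^{-1}$.

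It then suffices to show $\sqrt n(\widehat\gamma-\widetilde\gamma)=\widehat Q_n^{-1}\Delta_n=o_p(1)$ for
$\Delta_n := \sqrt n\,\widehat\EE_n[\phi(X)(\widehat\psi-\psi^*(Z))]$, since $\widehat Q_n^{-1}=O_p(1)$. Write $\hat\eta^{(k)}=(\hat e^{(k)},\hat\alpha^{(k)},\hat\nu^{(k)})$ and split $\Delta_n=\sum_{k=1}^K(T_{1,k}+T_{2,k})$,
\begin{align*}
T_{1,k}&=\tfrac1{\sqrt n}\!\sum_{i\in\text{fold }k}\!\Big(\phi(X_i)\bigl(\psi(Z_i,\hat\eta^{(k)})-\psi^*(Z_i)\bigr)-\EE\bigl[\phi(X)(\psi(Z,\hat\eta^{(k)})-\psi^*(Z))\bigm|\hat\eta^{(k)}\bigr]\Big),\\
T_{2,k}&=\tfrac{\sqrt n}{K}\,\EE\bigl[\phi(X)(\psi(Z,\hat\eta^{(k)})-\psi^*(Z))\bigm|\hat\eta^{(k)}\bigr].
\end{align*}
Since $K$ is fixed, it is enough to prove $T_{1,k},T_{2,k}=o_p(1)$ for each $k$.

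For $T_{2,k}$: coordinatewise Cauchy--Schwarz with $\phi$ bounded gives $\|T_{2,k}\|\lesssim\sqrt n\,\|\EE[\psi(Z,\hat\eta^{(k)})-\psi^*(Z)\mid X,\hat\eta^{(k)}]\|$, and applying \cref{thm:condneym} conditionally on $\hat\eta^{(k)}$ bounds this further by $\sqrt n\,\mathcal E(\hat\eta^{(k)})$. The hypothesized rates force every product in $\mathcal E(\hat\eta^{(k)})$ to be $o_p(n^{-1/2})$: $\|\hat\kappa_a-\kappa_a^*\|\|\hat e-e^*\|=o_p(n^{-1/2})$ by direct assumption, and both the $\alpha$--$\nu$ and $\nu$--$\nu$ cross-products are products of two $o_p(n^{-1/4})$ factors. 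Hence $T_{2,k}=o_p(1)$. For $T_{1,k}$: conditionally on $\hat\eta^{(k)}$ the fold-$k$ summands are i.i.d.\ and mean-zero, so
\[
\EE\bigl[\|T_{1,k}\|^2\bigm|\hat\eta^{(k)}\bigr]\lesssim \EE\bigl[\|\phi(X)\|^2\bigl(\psi(Z,\hat\eta^{(k)})-\psi^*(Z)\bigr)^2\bigm|\hat\eta^{(k)}\bigr].
\]
The bounds in \cref{cdte-assum} make $\eta\mapsto\psi(z,\eta)$ uniformly Lipschitz on $\Xi$ (the weight $(A-e)/(e(1-e))$ is bounded, $\alpha$ is bounded because $\det J>c_4$, and $\rho$ is $c_2$-Lipschitz in $\nu$), and componentwise consistency $\|\hat\eta^{(k)}-\eta^*\|=o_p(1)$ then forces the conditional second moment to vanish in probability. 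Unconditioning via dominated convergence (or conditional Markov, using the uniform boundedness of the integrand on $\Xi$) gives $T_{1,k}=o_p(1)$. Summing over the $K$ folds yields $\Delta_n=o_p(1)$, and Slutsky applied to $\sqrt n(\widehat\gamma-\gamma^*)=\sqrt n(\widetilde\gamma-\gamma^*)+\widehat Q_n^{-1}\Delta_n$ together with the oracle CLT completes the proof.

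The main obstacle is the empirical-process piece $T_{1,k}$: one needs an $L^2$-continuity of $\eta\mapsto\psi(\cdot,\eta)$ at $\eta^*$ that is not listed directly in \cref{cdte-assum} but is recovered from its boundedness-of-derivatives conditions by a short Lipschitz calculation. The decisive role of cross-fitting here is that it sidesteps any Donsker or uniform-entropy requirement on the nuisance class -- conditional independence of fold-$k$ data and $\hat\eta^{(k)}$ reduces the empirical-process term to a simple conditional second-moment bound, for which qualitative consistency of $\hat\eta^{(k)}$ is sufficient.
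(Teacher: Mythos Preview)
Your decomposition and overall architecture match the paper's proof exactly: both introduce the oracle OLS estimator $\widetilde\gamma$, invoke the standard sandwich CLT for $\sqrt n(\widetilde\gamma-\gamma^*)$, and then split $\sqrt n(\widehat\gamma-\widetilde\gamma)$ fold by fold into a bias piece handled by \cref{thm:condneym} and a centered empirical-process piece controlled via a conditional second-moment (Chebyshev) argument exploiting cross-fitting. Your $T_{2,k}$ and $T_{1,k}$ are precisely the paper's $\Lambda_{1,k}$ and $\Lambda_{2,k}$.

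There is, however, a gap in your handling of $T_{1,k}$. You assert that ``$\rho$ is $c_2$-Lipschitz in $\nu$'' by appealing to \cref{cdte-assum}, but the second bullet there bounds $\lvert D_{\nu_{a,j}}\EE[\rho_i(Y,\nu_a)\mid X,A=a]\rvert$, i.e.\ derivatives of the \emph{conditional expectation} of $\rho$, not of $\rho$ itself. These are very different: for CQTE, $\rho(y,q)=\tau-\II[y\le q]$ is discontinuous in $q$, so no pointwise Lipschitz bound exists, yet $\EE[\rho(Y,q)\mid X,A]=\tau-F_{Y\mid X,A}(q)$ is smooth with derivative the conditional density. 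Hence your ``short Lipschitz calculation'' does not deliver uniform Lipschitzness of $z\mapsto\psi(z,\eta)$ on $\Xi$. What the argument actually needs is $L^2$-continuity of $\nu\mapsto\rho(Y,\nu(X))$ at $\nu^*$; for indicator-type moments this comes from a separate computation (e.g.\ $\EE[(\II[Y\le\hat q]-\II[Y\le q^*])^2\mid X]\le c_2\lvert\hat q(X)-q^*(X)\rvert$ under a density bound), not from pointwise Lipschitzness. The paper's route here instead leans on the uniform bound $\lvert\rho_i\rvert\le c_5$ together with the assumed $o_p(1)$ rate on $\hat e$ and $o_p(n^{-1/4})$ rates on $\hat\alpha$; either way, the step requires more than reading off a Lipschitz constant from the Jacobian bound in \cref{cdte-assum}.
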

\Cref{thm:proj-normal} implies that we can just use out-of-the-box OLS with the built-in OLS inference as the final stage regression in \cref{cdte-alg}. The inference results would remain valid, provided nuisances are estimated slowly but not too slowly. In particular, we should generally use robust (aka sandwich or Huber–White) standard errors, as we do not expect the projection to have homoskedastic errors.

\section{EMPIRICAL RESULTS}\label{sec:empirical}

In this section, we demonstrate our method and theoretical results by applying \cref{cdte-alg} to learn CSQTEs (\cref{sec:csqte}). We first benchmark its performance in simulated data and then illustrate its use in a study of 401(k) eligibility and its effect on wealth accumulation. We provide additional results for CQTEs and C$f$RTEs in \cref{sec:exp-results}. Replication code is available at \url{https://github.com/CausalML/CDTE}. 

\subsection{Simulation Study}

\begin{figure*}[t]
\vspace{-1em}
     \centering
     \begin{subfigure}[b]{0.29\textwidth}
         \centering
         \includegraphics[width=\textwidth]{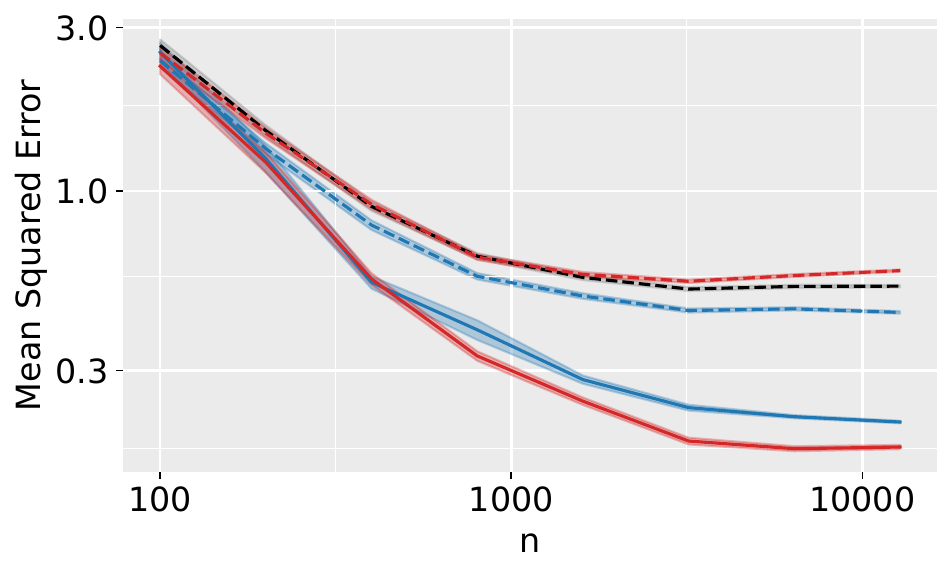}
     \end{subfigure}
     \begin{subfigure}[b]{0.28\textwidth}
         \centering
         \includegraphics[width=\textwidth]{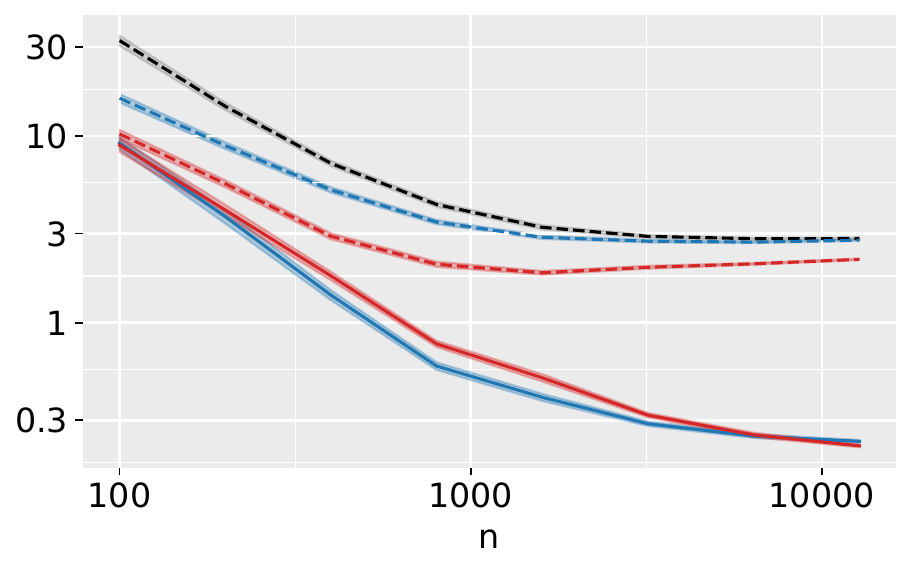}
     \end{subfigure}
     \begin{subfigure}[b]{0.39\textwidth}
         \centering
         \includegraphics[width=\textwidth]{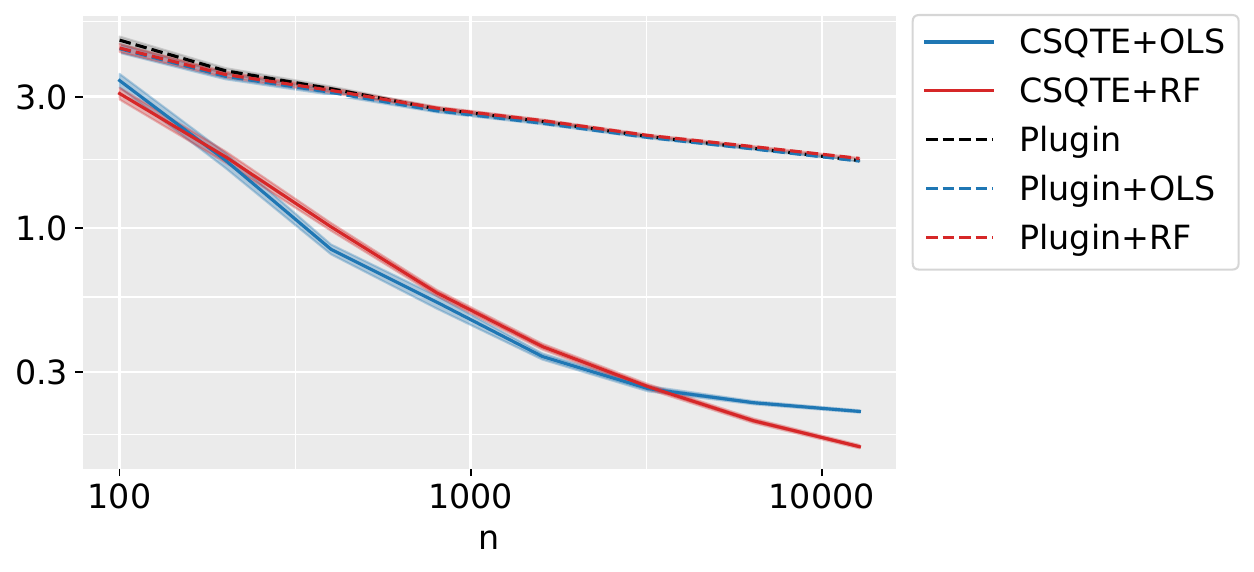}
     \end{subfigure}\\
     \begin{subfigure}[b]{0.29\textwidth}
         \centering
         \includegraphics[width=\textwidth]{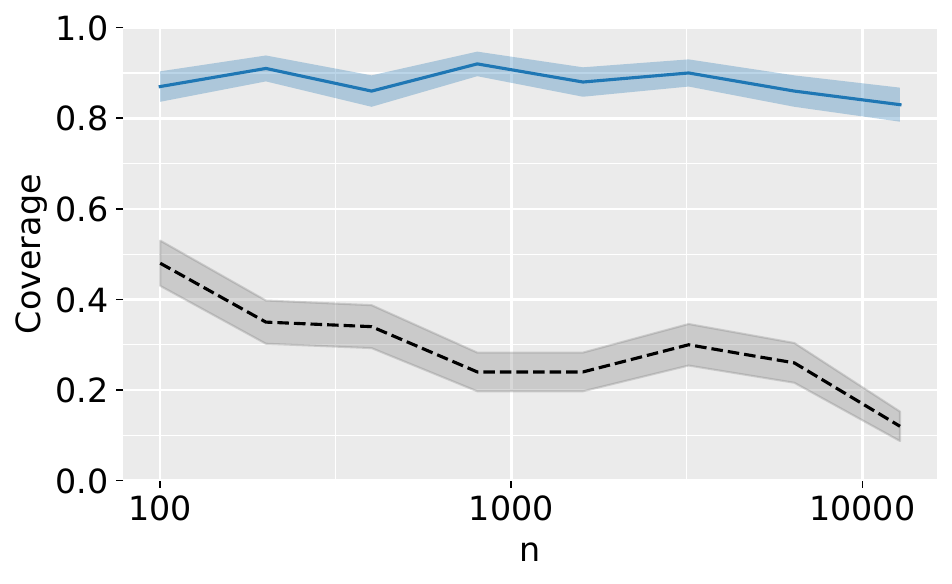}
         \caption{Flexible $\hat{\mu}$}
     \end{subfigure}
     \begin{subfigure}[b]{0.28\textwidth}
         \centering
         \includegraphics[width=\textwidth]{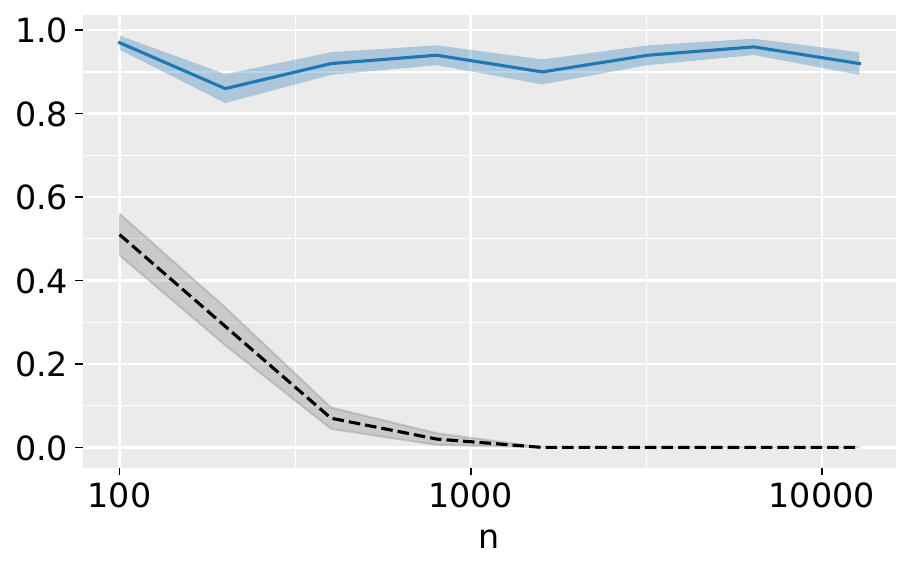}
         \caption{Misspecified $\hat{\mu}$}
     \end{subfigure}
     \begin{subfigure}[b]{0.39\textwidth}
         \centering
         \includegraphics[width=\textwidth]{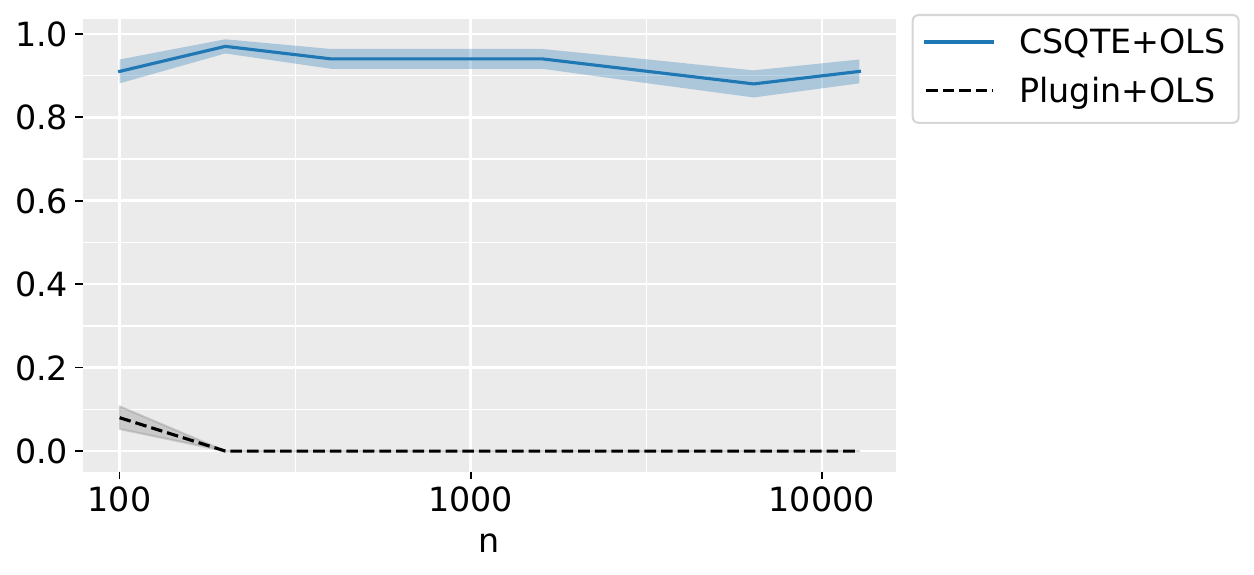}
         \caption{Slow $\hat{\mu}$}
     \end{subfigure}
        \caption{Mean squared error (MSE) and 95\% confidence interval coverage for different CSQTE learners. Shaded regions depict plus/minus one standard error over $100$ simulations.}
        \label{fig:csqte-sims}
\end{figure*}

We sample from the following data generating process:
\begin{align*}
    &X \sim \text{Unif}([0, 1]^{10}),\quad
    A  \sim \text{Bernoulli}(\sigma(6X_0-3)),\\
    &Y\mid X, A \sim \text{Lognormal}(X_0+AX_1, 0.2),
\end{align*}
where $\sigma$ is the logistic sigmoid. The coefficients in the expression for $A$ are chosen such that the true propensity lies in the range $[0.05, 0.95]$. We seek to measure CSQTE at level $\tau=0.75$, \ie, the conditional average of values above the $0.75$ conditional quantile. For this DGP, the true CSQTE at $\tau=0.75$ is given by $\mu_1(X; \tau)-\mu_0(X; \tau)= 1.29(e^{X_0+X_1}-e^{X_0})$, which is heterogeneous in $X$. 

We estimate ${e}(X)$ using logistic regression and ${q}_a(X;\tau)$ using a quantile random forest (QRF) \citep{meinshausen2006quantile}. We consider three options for estimating $\mu_a(X;\tau)$. First, we consider a \textit{flexible} learner we term the superquantile RF (SQRF). SQRF uses a RF to calculate weights (like QRF) and then computes \cref{eq:csqte-opt}, replacing the expectation by the weighted average over the data.
Second, we consider doing the same using a Gaussian kernel for calculating weights, choosing the bandwidth by Silverman's rule \citep{silverman2018density}. We refer to this as the \textit{slow} learner because it suffers badly from the curse of dimension.
Third, we consider estimating $\mu_a(X;\tau)$ using ordinary least squares for the regression $\frac{1}{1-\tau}\EE[Y\II[Y\geq \widehat{q}_a(x;\tau)]\mid X=x]$ using the estimated quantile $\widehat{q}_a$. We term this estimator a \textit{misspecified} learner since it is unlikely to fully capture the complexity of the superquantile (which is an exponential function of the features). For the final stage of \cref{cdte-alg}, we use either an ordinary least squares model (CSQTE+OLS) or a RF (CSQTE+RF). We set $K=5$ as the number of folds required by \cref{cdte-alg}. All forests use \texttt{scikit-learn} \citep{scikit-learn} defaults except for the minimum leaf size which is set to $n/20$ to control overfitting.

We compare the out-of-sample mean squared error (MSE) of the CSQTE estimator with that of the naive plug-in estimator from Eq. \ref{plugin-alg}.  To account for possible smoothing by the last stage regressor, we also construct Plugin+OLS and Plugin+RF given by running an additional OLS/RF model on the cross-fitted plug-in predictions. And, when the second stage algorithm is OLS, we check whether the 95\%-confidence interval OLS returns for the $X_1$ coefficient contains the coefficient from the true projection. The results are shown in  \Cref{fig:csqte-sims}. We run $100$ simulations for each $n=100,200,\ldots,12800$ and evaluate MSE over a fixed set of 500 random $X$ values. Our CSQTE learner provides uniformly strong MSE performance, and the results show this is not just a consequence of the second-stage regression. For inference, we achieve good coverage whereas plug-in approaches yield little to no coverage. These findings confirm our theoretical results: the quadratic dependence on nuisance errors enables oracle rates when the nuisances are estimated slowly or are misspecified (\cref{thm:condneym}, \cref{thm:erm}), and we can obtain valid inference when projecting onto linear spaces (\cref{thm:proj-normal}).

\subsection{Impact of 401(k) Eligibility on Financial Wealth}

\begin{figure*}[t]
\vspace{-1em}
\hfill\begin{minipage}[t]{0.27\linewidth}
\vspace{0pt}
\centering
\includegraphics[width=\textwidth]{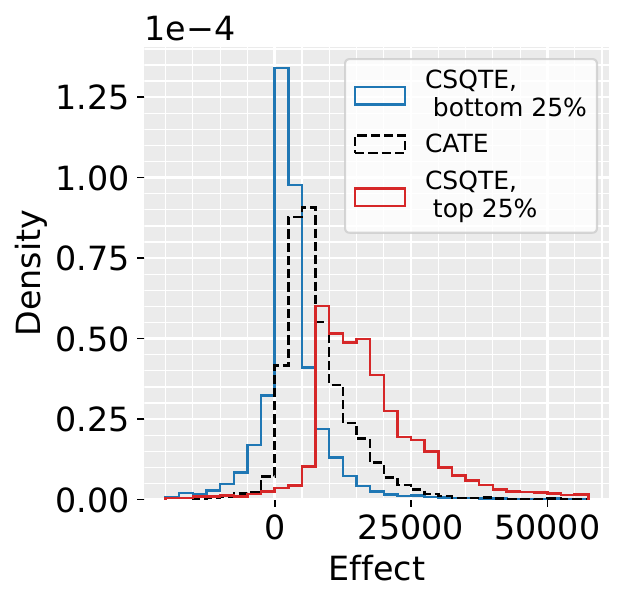}
\end{minipage}\hfill%
\hspace{1mm}
\begin{minipage}[t]{0.65\linewidth}
\vspace{10pt}
\centering
{\footnotesize\begin{tabular}[b]{ l c c c} \toprule
  \multirow{2}{*}{\textbf{Coefficient}} & \textbf{CSQTE} & \multirow{2}{*}{\textbf{CATE}} & \textbf{CSQTE} \\ 
  & Bottom 25\% & & Top 25\% \\ \midrule
 \multirow{2}{*}{\parbox{1cm}{Intercept\break{\scriptsize(\$10{,}000)}}} & $-0.021$ & $-0.95$ & $-2.07$\\
 & $(-1.06,\, 1.02)$ & $(-2.42,\, 0.51)$ & $(-7.04,\, 2.90)$ \\ 
 \multirow{2}{*}{Income} & $0.25^{**}$ & $0.21$ & $-0.05$ \\
  & $(0.08,\, 0.43)$ & $(-0.08,\, 0.50)$ & $(-1.12,\, 1.01)$\\ 
  \multirow{2}{*}{Age} &$105$  & $232^{**}$  & $513$ \\ 
   & $(-75,\, 286)$ & $(24,\, 441)$ & $(-182,\, 1210)$\\ 
   \multirow{2}{*}{Education} &$-801^{**}$ & $16$ & $1340$ \\ 
    &$(-1440,\, -164)$& $(-1050,\, 1090)$ & $(-2490,\, 5180)$\\ 
    \bottomrule
\end{tabular}}
\end{minipage}\hfill\vspace{-0.65em}
\captionlistentry[table]{entry for table}
\captionsetup{labelformat=andtable}
{\makeatletter\edef\@currentHref{table.caption.\the\c@table}\label{tab:csqte-401k-tab}}
\caption{Effect of 401(k) eligibility in dollars on the bottom 25\%, top 25\%, and average. The figure shows the distribution of CSQTEs and CATEs when using a random forest last stage. The table shows OLS inference on the projection of CSQTE and CATE on income, age, and education. "**" indicates statistical significance at level $0.05$ (\textit{p}-value < 0.05).}
\label{fig:csqte-401k-fig}
\vspace{-1em}
\end{figure*}

We apply the CSQTE estimator to study the impact of 401(k) eligibility on net financial assets. We use the dataset from \citet{chernozhukov2004effects}, which is based on the 1991 Survey of Income and Program Participation. The data contains 9{,}915 observations with 9 covariates such as age, income, education, family size, marital status, IRA participation, {\etc} While the eligibility for 401(k) (the treatment $A$) is not assigned at random, \citep{poterba1994401,chernozhukov2004effects} argue that unconfoundedness can be assumed conditional on the observed covariates. The outcome of interest ($Y$) is the net financial assets of an individual, defined as the sum of 401(k) balance, bank accounts and interest-earning assets minus non-mortgage debt. 

We apply the CSQTE estimator to understand effect heterogeneity beyond the conditiona  mean. Specifically, we estimate the average treatment effects on the bottom and top $25\%$ of financial asset holders, conditional on covariates. We analyze the results alongside CATE estimates given by a DR-Learner \citep{kennedy2020optimal} as implemented by \citet{econml}. For nuisance estimation, we use RF models with hyperparameters as in \citet{chernozhukov2018double}. The superquantile model is SQRF described above, the quantile model is QRF, the outcome learner for CATE is a RF regression and the propensity model is a RF classifier. 

We consider two options for second-stage regressions.
First, we consider using an RF model using all 9 covariates. We train the three estimators (CATE and upper/lower CSQTEs), and plot the distribution of predicted conditional effects on the 9{,}915 observations in \Cref{fig:csqte-401k-fig}.  
We observe that 401(k) eligibility has a much higher financial impact on the high end of the conditional net worth distributions as compared to those on the lower end. While the CATE distribution reassuringly lies in between the bottom 25\% and top 25\% distributions, it fails to capture this disparity in effects.  

To understand the heterogeneity in CSQTEs and CATEs, we use an OLS projection in the final stage. We choose the top three features that the RF last stage picked up as most important (see \Cref{fig:app-401k-rf-feature-importance} in Appendix \ref{sec:exp-results}): income, age and education. The resulting coefficients and 95\% confidence intervals are depicted in \Cref{tab:csqte-401k-tab}. 
For the bottom 25\%, income and education are the main (statistically significant) drivers of the average effects. The positive income coefficient suggests that 401(k) eligibility provides more gains to those who potentially have the funds to invest. Likewise, the negative education coefficient means that the effects are higher among less educated earners.
We hypothesize that higher educated earners might have a more comprehensive financial education and would save and invest regardless of 401(k) eligibility, whereas 401(k) availability provides lower educated earners a default investment option.
For the top 25\% asset holders, higher age and education rather than income have the largest impact on 401(k) eligibility returns.
This group also displays more variability as none of the coefficients are statistically significant. The CATEs provide middle-of-the-road estimates that miss out on trends at the two ends of the spectrum. Thus, CSQTEs are a powerful tool for uncovering different behaviours across the conditional distribution that a simple average would obscure. 

\section{CONCLUDING REMARKS}

We provide new tools to assess distributional effect heterogeneity through agnostic and robust learning of CDTEs in a generic framework that includes quantiles, super-quantiles, and $f$-risk measures. These tools can be used to analyze treatments as well as to support personalized decisions that take risk into account.
We provide strong guarantees for both learning and inference, and demonstrate our methods in both synthetic and real data. We further discuss some limitations of our work in Appendix \ref{sec:limitations}.

\subsubsection*{Acknowledgements}
This material is based upon work supported by the National Science Foundation under Grant No. 1939704 and the U.S. Department of Energy, Office of Science, Office of Advanced Scientific Computing Research, under Award Number DE-SC0023112.

\bibliography{ref}
\bibliographystyle{abbrvnat}

\appendix
\onecolumn

\section{PROOFS OF MAIN THEOREMS}\label{sec:thm-proofs}

\subsection{Proof of Theorem \ref{thm:condneym}}

We first note the following useful identities:

\begin{enumerate}
    \item (Functional analog of Taylor's expansion - 2\textsuperscript{nd} order). Let $F:\Fcal^d\rightarrow\RR$ where $\Fcal$ is a vector space of functions. For any $h, h'\in \Fcal^d$, assume $t\mapsto F(t\cdot h+(1-t)\cdot h')$ has second order derivatives in an open interval containing $[0,1]$, then $\exists \; \overline{h}\in \text{conv}(\{h, h'\})$ such that 
    \begin{equation*}
        F(h') = F(h) + \sum_{i=1}^d D_{h_i}F(h)(h'_i-h_i) + \sum_{i=1}^d \sum_{j=1}^d D_{h_i}D_{h_j}F(\overline{h})(h'_i-h_i)(h'_j-h_j)
    \end{equation*}
    \item For all $i\in \overline{1,m}, a\in\{0, 1\}, x\in \Xcal$, the following hold:
    \begin{align}
        &\EE[\rho_i(Y, \nu_a^*)\mid X=x, A=a] = 0 \tag{Moment equations}\\
        &\sum_{i=1}^{m+1} \alpha_{a, i}(x, \nu^*_a)D_{\nu_{a,j}}\EE[\rho_i(Y, \nu_a^*)\mid X=x, A=a] = \II[j=1] \tag{Definition of $\alpha$}
    \end{align}
\end{enumerate}

\begin{proof}
We wish to bound the term:
\begin{align*}
     \mathcal{E}(e , \alpha, \nu) = \left\|\EE[\psi(Z, e, \alpha, \nu)\mid X=x]-\EE[\psi(Z, e^*, \alpha^*, \nu^*)\mid X=x]\right\|
\end{align*}
To simplify our calculations, we write $\psi$ as a difference of two terms $\psi^1 - \psi^0$ given by:
\begin{align*}
    \psi^1(Z, e, \alpha, \nu) &= \kappa_1(X) - \frac{A}{e(X)}\sum_{i=1}^{m+1} \alpha_{1, i}(X, \nu_{1, i})\rho(Y, \nu_{1, i})\\
    \psi^0(Z, e, \alpha, \nu) &= \kappa_0(X) - \frac{1-A}{1-e(X)}\sum_{i=1}^{m+1} \alpha_{1, i}(X, \nu_{0, i})\rho(Y, \nu_{0, i})
\end{align*}
With this notation, we have:
\begin{align*}
     \mathcal{E}(e , \alpha, \nu)\lesssim
     \sum_{a=0}^1  \left\|\EE\left[\psi^a(Z, e, \alpha, \nu)-\psi^a(Z, e^*, \alpha^*, \nu^*)\mid X=x\right]\right\|
\end{align*}
The inequality above follows from the inequalities $(a+b)^2\leq 2(a^2+b^2)$ and $\sqrt{a+b}\leq \sqrt{a}+\sqrt{b}$ (for non-negative $a, b$). Without loss of generality, we consider the bound for $\psi^1$. The proof for $\psi^0$ is very similar. We have:
\begin{align*}
    & \EE\left[\psi^1(Z, e, \alpha, \nu)-\psi^1(Z, e^*, \alpha^*, \nu^*)\mid X=x\right]  \\
    & \qquad =\sum_{a=0}^1 \EE\left[\psi^1(Z, e, \alpha, \nu)-\psi^1(Z, e^*, \alpha^*, \nu^*)\mid X=x, A=a\right] P(A=a\mid X=x)\\
    & \qquad = \kappa_1(x)- \kappa_1^*(x) -\frac{e^*(x)}{e(x)}\sum_{i=1}^{m+1}
                \alpha_{1, i}(x, \nu_1)\EE[\rho_i(Y, \nu_1)\mid X=x, A=1]\\
    & \qquad = \kappa_1(x)- \kappa_1^*(x) -
                \underbrace{\frac{e^*(x)}{e(x)}\sum_{i=1}^{m+1}
                (\alpha_{1, i}(x, \nu_1) - \alpha^*_{1, i}(x, \nu^*_1))\EE[\rho_i(Y, \nu_1)\mid X=x, A=1]}_{\Lambda_1}\\
    & \hspace{9.5em}  -\underbrace{\frac{e^*(x)}{e(x)}\sum_{i=1}^{m+1}
                \alpha^*_{1, i}(x, \nu^*_1)\EE[\rho_i(Y, \nu_1)\mid X=x, A=1]}_{\Lambda_2}
\end{align*}
where in the last equality we subtracted and then added back a term. We now study $\Lambda_1$ and $\Lambda_2$. For both these quantities, we do a Taylor expansion of $\EE[\rho_i(Y, \nu_1)\mid X=x, A=1]$ around $\nu_1^*$. For $\Lambda_1$, it suffices to use the Taylor expansion to first order only. For $\Lambda_2$, we perform a second order expansion:
\begin{align*}
    &\EE[\rho_i(Y, \nu_1)\mid X=x, A=1] = \EE[\rho_i(Y, \nu^*_1)\mid X=x, A=1] \\
    & \hspace{5em} + \sum_{j=1}^{m+1} D_{\nu_j} \EE[\rho_i(Y, \overline{\nu})\mid X=x, A=1] (\nu_{1,j}(x) - \nu^*_{1,j}(x)) \tag{first order expansion}\\
    & \EE[\rho_i(Y, \nu_1)\mid X=x, A=1] = \EE[\rho_i(Y, \nu^*_1)\mid X=x, A=1] \\
    & \hspace{5em} + \sum_{j=1}^{m+1} D_{\nu_j} \EE[\rho_i(Y, \nu^*_1)\mid X=x, A=1] (\nu_{1,j}(x) - \nu^*_{1,j}(x)) \\
    & \hspace{5em} + \frac{1}{2}\sum_{j=1}^{m+1}\sum_{l=1}^{m+1} D_{\nu_j}D_{\nu_l} \EE[\rho_i(Y, \overline{\nu})\mid X=x, A=1](\nu_{1,j}(x) - \nu^*_{1,j}(x))(\nu_{1,l}(x) - \nu^*_{1,l}(x))
    \tag{second order expansion}
\end{align*}
Then $\Lambda_1$ is given by:
\allowdisplaybreaks
\begin{align*}
    \Lambda_1 & = \frac{e^*(x)}{e(x)}\sum_{i=1}^{m+1}
                (\alpha_{1, i}(x, \nu_1) - \alpha^*_{1, i}(x, \nu^*_1))\EE[\rho_i(Y, \nu_1)\mid X=x, A=1] \\
                & = \frac{e^*(x)}{e(x)}\sum_{i=1}^{m+1}
                (\alpha_{1, i}(x, \nu_1) - \alpha^*_{1, i}(x, \nu^*_1))\Bigg(\cancelto{0}{\EE[\rho_i(Y, \nu^*_1)\mid X=x, A=1]} \\
                & \hspace{13em}+ \sum_{j=1}^{m+1} D_{\nu_j} \EE[\rho_i(Y, \overline{\nu})\mid X=x, A=1] (\nu_{1,j}(x) - \nu^*_{1,j}(x))\Bigg) \tag{first order expansion}\\
                & = \frac{e^*(x)}{e(x)} \sum_{i=1}^{m+1}\sum_{j=1}^{m+1}D_{\nu_j} \EE[\rho_i(Y, \overline{\nu})\mid X=x, A=1]  (\alpha_{1, i}(x, \nu_1) - \alpha^*_{1, i}(x, \nu^*_1))(\nu_{1,j}(x) - \nu^*_{1,j}(x))\\
    \|\Lambda_1\| &\leq \frac{c_2}{c_1} \sum_{i=1}^{m+1}\sum_{j=1}^{m+1}G_{ij}\|\alpha_{1,i} - \alpha^*_{1,i}\| \;\| \nu_{1,j} - \nu^*_{1, j}\|\\
    & \lesssim \sum_{i=1}^{m+1}\sum_{j=1}^{m+1} G_{ij}\|\alpha_{1,i} - \alpha^*_{1,i}\| \;\| \nu_{1,j} - \nu^*_{1, j}\|
\end{align*}
We proceed in a similar way for $\Lambda_2$, but this time using the second order expansion:
\begin{align*}
    \Lambda_2 &= \frac{e^*(x)}{e(x)}\sum_{i=1}^{m+1}
                \alpha^*_{1, i}(x, \nu^*_1)\EE[\rho_i(Y, \nu_1)\mid X=x, A=1]\\
                & = \frac{e^*(x)}{e(x)}\sum_{i=1}^{m+1}
                \alpha^*_{1, i}(x, \nu^*_1)\Bigg(
                \cancelto{0}{\EE[\rho_i(Y, \nu^*_1)\mid X=x, A=1]} \\
                & \hspace{5em} + \sum_{j=1}^{m+1} D_{\nu_j} \EE[\rho_i(Y, \nu^*_1)\mid X=x, A=1] (\nu_{1,j}(x) - \nu^*_{1,j}(x)) \\
                & \hspace{5em} + \frac{1}{2}\sum_{j=1}^{m+1}\sum_{l=1}^{m+1} D_{\nu_j}D_{\nu_l} \EE[\rho_i(Y, \overline{\nu}_1)\mid X=x, A=1](\nu_{1,j}(x) - \nu^*_{1,j}(x))(\nu_{1,l}(x) - \nu^*_{1,l}(x))
                \Bigg)\\
                & = \frac{e^*(x)}{e(x)}\sum_{j=1}^{m+1}\underbrace{\sum_{i=1}^{m+1}
                \alpha^*_{1, i}(x, \nu^*_1) D_{\nu_j} \EE[\rho_i(Y, \nu^*_1)\mid X=x, A=1]}_{=\II[j=1]}(\nu_{1,j}(x) - \nu^*_{1,j}(x)) \tag{see useful identities}\\
                & \quad + \frac{1}{2}\frac{e^*(x)}{e(x)}\sum_{i=1}^{m+1}
                \sum_{j=1}^{m+1}\sum_{l=1}^{m+1} \Bigg(\alpha^*_{1, i}(x, \nu^*_1) D_{\nu_j}D_{\nu_l} \EE[\rho_i(Y, \overline{\nu}_1)\mid X=x, A=1]\\
                & \hspace{12em} \cdot (\nu_{1,j}(x) - \nu^*_{1,j}(x))(\nu_{1,l}(x) - \nu^*_{1,l}(x))
                \Bigg)\\
                & = \frac{e^*(x)}{e(x)}(\kappa_1(x) - \kappa_1^*(x))\\
                & \quad + \frac{1}{2}\frac{e^*(x)}{e(x)}\sum_{i=1}^{m+1}
                \sum_{j=1}^{m+1}\sum_{l=1}^{m+1} \Bigg(\alpha^*_{1, i}(x, \nu^*_1) D_{\nu_j}D_{\nu_l} \EE[\rho_i(Y, \overline{\nu}_1)\mid X=x, A=1]\\
                & \hspace{12em} \cdot (\nu_{1,j}(x) - \nu^*_{1,j}(x))(\nu_{1,l}(x) - \nu^*_{1,l}(x))
                \Bigg)
\end{align*}
Adding back the $\kappa_1$ terms to $\Lambda_2$, we have:
\begin{align*}
    \kappa_1(x) - \kappa_1(x) - \Lambda_2 & = \frac{1}{e(x)}(e(x)-e^*(x))(\kappa_1(x) - \kappa_1^*(x))\\
    & \quad + \frac{1}{2}\frac{e^*(x)}{e(x)}\sum_{i=1}^{m+1}
                \sum_{j=1}^{m+1}\sum_{l=1}^{m+1} \Bigg(
                \alpha^*_{1, i}(x, \nu^*_1) D_{\nu_j}D_{\nu_l} \EE[\rho_i(Y, \overline{\nu}_1)\mid X=x, A=1]\\
                & \hspace{12em} \cdot (\nu_{1,j}(x) - \nu^*_{1,j}(x))(\nu_{1,l}(x) - \nu^*_{1,l}(x))
                \Bigg)\\
    \|\kappa_1(x) - \kappa_1(x) - \Lambda_2\| & \leq \frac{1}{c_1}\|\kappa_1-\kappa_1^*\|\|e-e^*\| + \frac{c_3c_4}{2c_1}\sum_{j=1}^{m+1}\sum_{l=1}^{m+1} H_{jl} \|\nu_{1,j} - \nu^*_{1,j}\| \; \|\nu_{1,l} - \nu^*_{1,l}\|\\
    & \lesssim \|\kappa_1-\kappa_1^*\|\|e-e^*\| + \sum_{j=1}^{m+1}\sum_{l=1}^{m+1} H_{jl}\|\nu_{1,j} - \nu^*_{1,j}\| \; \|\nu_{1,l} - \nu^*_{1,l}\|
\end{align*}
Constants $c_1, c_2,c_3,c_4$ and binary matrices $H, G$ are defined in Assumption \ref{cdte-assum}. Putting everything together, we have:
\begin{align*}
    \left\|\EE\left[\psi^1(Z, e, \alpha, \nu)-\psi^1(Z, e^*, \alpha^*, \nu^*)\mid X=x\right] \right\| &= \|\kappa_1(x) - \kappa_1(x) - \Lambda_2 - \Lambda_1\|\\
    &  \lesssim \|\kappa_1(x) - \kappa_1(x) - \Lambda_2\| + \|\Lambda_1\|\\
    & \lesssim  \|\kappa_1-\kappa_1^*\|\|e-e^*\| + \sum_{j=1}^{m+1}\sum_{l=1}^{m+1}H_{jl} \|\nu_{1,j} - \nu^*_{1,j}\| \; \|\nu_{1,l} - \nu^*_{1,l}\| \\
    & \hspace{9.4em} + \sum_{i=1}^{m+1}\sum_{j=1}^{m+1}G_{ij}\|\alpha_{1,i} - \alpha^*_{1,i}\| \;\| \nu_{1,j} - \nu^*_{1, j}\|
\end{align*}
Putting the $\psi^0$ and $\psi^1$ bounds together, we obtain the desired result:
\begin{align*}
    \mathcal{E}(e,\alpha, \nu)
    & \lesssim \sum_{a=1}^1 \Bigg( \|\kappa_a-\kappa^*_a\| \; \|e -e^*\|
    +\sum_{i=1}^{m+1}\sum_{j=1}^{m+1}G_{ij}\|\alpha_{a,i} - \alpha^*_{a, i}\| \;\|\nu_{a,j} - \nu^*_{a, j}\|
    \tag*{}\\
    & \hspace{12.4em} + \sum_{i=1}^{m+1}\sum_{j=1}^{m+1}H_{ij}\|\nu_{a,i} - \nu^*_{a, i}\| \;\|\nu_{a,j} - \nu^*_{a, j}\|
    \Bigg) \tag*{}
\end{align*}
\end{proof}

\subsection{Proof of Theorem \ref{thm:erm}}
\begin{proof}
Let $I_k=\{i:i= k-1\;(\text{mod}\; K)\}$ and $I^C_k=\{i:i\neq k-1\;(\text{mod}\; K)\}$ be the $k^{\text{th}}$ data fold and its complement, let $n_k=\abs{I_k}\in\{\lfloor n/K\rfloor,\lceil n/K \rceil\}$, and let $\lambda_k=n_k/n$ (note none of $I_k,n_k,\lambda_k$ are random). Let $\PP g(Z)$ denote expectation over $Z$ alone (that is, conditioning on the data, should $g$ depend on it) and $\hat \PP_kg(Z)$ denote empirical expectation over $I_k$. Further, let $\PP(g(Z)\mid X)$ denote the conditional expectation, $\|g\|_2=\PP g^2$, and $\Pi_\Fcal(g)\in\argmin_{f\in\Fcal}\|f-g\|_2$.

Define the pseudo-outcome random variables: 
\begin{align*}
\psi&=\psi(Z, e^*, \alpha^*, \nu^*),\\
\hat\psi_k&=\psi(Z, \hat e^{(k)},\hat \alpha^{(k)},\hat \nu^{(k)}),\\
\hat\psi&=\sum_{k=1}^K\lambda_k\hat\psi_k.
\end{align*}
Further, define the squared-error objectives: 
\begin{align*}
\hat R_k(f)&=\hat \PP_k(f(X)-\hat\psi_k)^2,\\
\hat R(f)&=\sum_{k=1}^K\lambda_k\hat R_k(f),\\
\tilde R_k(f)&=\PP(f(X)-\hat\psi_k)^2,\\
\tilde R(f)&=\sum_{k=1}^K\lambda_k\tilde R_k(f),\\
\bar R(f)&=\PP (f(X)-\hat\psi)^2,\\
R(f)&=\PP(f(X)-\psi)^2.
\end{align*}
Note that $\tilde R(f)-\bar R(f)=\tilde R(f')-\bar R(f')$ for any $f,f'$, that is, $\tilde R(f)$ and $\bar R(f)$ are equal up to additive constants.
Finally, we have the predictors:
\begin{align*}
\hat f&\in\argmin_{f\in\Fcal}\hat R(f),\\
\bar f&=\Pi_{\Fcal}(\PP(\hat\psi\mid X))\in\argmin_{f\in\Fcal}\bar R(f)=\argmin_{f\in\Fcal}\tilde R(f),\\
f^*&=\Pi_{\Fcal}(\PP(\psi\mid X))\in\argmin_{f\in\Fcal} R(f).
\end{align*}
Note $\hat f=\widehat{\kcdte}$ and $f^*=\kcdte$; we rename them for brevity.

We then have:
\begin{align*}
\magd{\hat f-f^*}&\leq\magd{\hat f-\bar f}+\magd{\bar f-f^*}
\\&=\magd{\hat f-\bar f}+\magd{\Pi_{\Fcal}(\PP(\hat\psi\mid X))-\Pi_{\Fcal}(\PP(\psi\mid X))}\\
&\leq \underbrace{\magd{\hat f-\bar f}}_{\mathrm{Error}_A}+\underbrace{\magd{\PP(\hat\psi\mid X)-\PP(\psi\mid X)}}_{\mathrm{Error}_B},\end{align*}
where the inequality is because $\Fcal$ is closed convex

We first tackle $\mathrm{Error}_B$:
\begin{align*}
\magd{\PP(\hat\psi\mid X)-\PP(\psi\mid X)}
&=\magd{\sum_{k=1}^K\lambda_k(\PP(\hat\psi_k\mid X)-\PP(\psi\mid X))}
\\&\leq\sum_{k=1}^K\lambda_k\magd{\PP(\hat\psi_k\mid X)-\PP(\psi\mid X)},
\end{align*}
which we bound as $\lesssim \mathcal E$ by \cref{thm:condneym}.

Next, we address $\mathrm{Error}_A$. Note that $\magd{\hat f-\bar f}\geq t$ means that $\tilde R(\hat f)-\tilde R(\bar f)\geq t^2$ and $\hat R(\hat f)-\hat R(\bar f)\leq 0$. By intermediate value theorem and convexity of $\tilde R,\hat R,\Fcal$, we have for some $f\in[\hat f,\bar f]$ that $\tilde R(f)-\tilde R(\bar f)=t^2$ and $\hat R(f)-\hat R(\bar f)\leq 0$. This in turn implies that $\exists f\in\Fcal$ with $\magd{f-\bar f}\leq t$ and $(\tilde R(f)-\tilde R(\bar f))-(\hat R(f)-\hat R(\bar f))\geq t^2$. Because $\tilde R-\hat R$ is average of $\tilde R_k-\hat R_k$, this in turn implies that for at least one $k\in\overline{1, K}$ we have $\exists f\in\Fcal$ with $\magd{f-\bar f}\leq t$ and $(\tilde R_k(f)-\tilde R_k(\bar f))-(\hat R_k(f)-\hat R_k(\bar f))\geq t^2$. 
Since $\abs{\psi}\leq c_5,\abs{\hat\psi_k}\leq c_5,\abs{f(x)}\leq c_5$, we have $(f(X)-\hat\psi_k)^2-(\bar f(X)-\hat\psi_k)^2\leq 4c_5\abs{f(X)-\bar f(X)}$. Now, consider $g(Z)=\frac{(f(X)-\bar f(X))(f(X)+\bar f(X)-2\hat\psi_k)}{4c_5}$. Then $(\PP-\hat\PP_k)g\geq t^2/(4c_5)$ and $\|g\|\leq \|f-\bar f\|\leq t$ since $\|f(X)+\bar f(X)-2\hat\psi_k\|\leq 4c_5$.

Therefore, by union bound and iterated expectations,
\begin{align*}
& \Prb{\magd{\hat f-\bar f}\geq t}\\
&\leq \sum_{k=1}^K\EE\;\Prb{\exists g\in\left\{\frac{(f-\bar f)(f+\bar f-2\hat\psi_k)}{4c_5}:f\in\Fcal\right\}~:~\PP g^2\leq t^2,~(\PP-\hat\PP_k)g\geq t^2/(4c_5) \mid \{Z_i:i\in I_k^C\}}.
\end{align*}
By lemma 3.4.2 of \citet{vdv_wellner} and Markov's inequality, we have that 
\begin{align*}
& \Prb{\exists g\in\left\{\frac{(f-\bar f)(f+\bar f-2\hat\psi_k)}{4c_5}:f\in\Fcal\right\}~:~\PP g^2\leq t^2,~(\PP-\hat\PP_k)g\geq t^2/(4c_5) \mid \{Z_i:i\in I_k^C\}}\\
& \leq 4c_5J(1+c_5J/(\sqrt{n_k}t^2))/t^2,
\end{align*}
where $J=t+\int_0^t\sqrt{\epsilon^{-r}}d\epsilon\leq t+\frac{2}{2-r}t^{1-r/2}$. Thus, $\mathrm{Error}_A=O_p(n^{-1/(2+r)})$.
\end{proof}

\subsection{Proof of Theorem \ref{cdte-rates}}
\begin{proof}
Using the stability conditions for the estimator $\widehat{\EE}_n$ outlined in Theorem \ref{cdte-rates}, we can immediately apply Theorem 1 from \cite{kennedy2020optimal} (as appears in the v2 preprint on arXiv) with the cross-fitted nuisances to obtain:
\begin{align*}
     \left\|(\widehat{\kcdte})-(\kcdte)\right\| & \lesssim  \left\|(\widetilde{\kcdte})-(\kcdte)\right\| \\
     & \quad + \sum_{k=1}^K\left\|\EE[\psi(Z, \widehat{e}\s k, \widehat{\alpha}\s k, \widehat{\nu}\s k)\mid X=x]-\EE[\psi(Z, e^*, \alpha^*, \nu^*)\mid X=x]\right\|\\
     & \lesssim  \left\|(\widetilde{\kcdte})-(\kcdte)\right\|  + \mathcal{E}
\end{align*}
where the last inequality was obtained by applying Theorem \ref{thm:condneym} to nuisance sets $(\widehat{e}\s k, \widehat{\alpha}\s k, \widehat{\nu}\s k)$ in each fold $k\in \overline{1,K}$.
\end{proof}

\subsection{Proof of Theorem \ref{thm:proj-normal}}
\begin{proof} 

Let $\Ical_k = \{i: i = k-1\;(\text{mod}\; K)\}$ be the data indices in the $k$-th data fold, and let $\widehat\EE_k f(Z)= \frac{1}{|\Ical_k|}\sum_{i\in \Ical_k}f(Z_i)$ be the empirical expectation of data with indices in $\Ical_k$. The linear regression parameters $\gamma^*, \widetilde{\gamma}$ and $\widehat{\gamma}$ are given by:
\begin{align*}
    \gamma^* &= \EE[\Xpi\Xpi^T]^{-1}\EE[\kcdte(X)\Xpi]\\
    & = \EE[\Xpi \Xpi^T]^{-1}\EE[\psi(Z, e^*, \alpha^*, \nu^*)\Xpi] \tag{consistency and iterated expectations}\\
    \widetilde{\gamma} &= \widehat\E_n[\Xpi\Xpi^T]^{-1}\widehat\E_n[\psi(Z, e^*, \alpha^*, \nu^*)\Xpi]\\
    & = \widehat\E_n[\Xpi\Xpi^T]^{-1}\frac{1}{K}\sum_{k=1}^K\widehat\EE_k[\psi(Z, e^*, \alpha^*, \nu^*)\Xpi]\\
    \widehat{\gamma} &= \widehat\E_n[\Xpi\Xpi^T]^{-1}
    \frac{1}{K}\sum_{k=1}^K\widehat\EE_k[\psi(Z, \widehat{e}\s k, \s k, \widehat{\nu}\s k)\Xpi]
\end{align*}
We note that we can rewrite $\sqrt{n}(\widehat{\gamma}-\gamma^*)$ as:
\begin{equation}\label{beta-decomp}
    \sqrt{n}(\widehat{\gamma}-\gamma^*)=\sqrt{n}(\widehat{\gamma}-\widetilde{\gamma}) + \underbrace{\sqrt{n}(\widetilde{\gamma}-\gamma^*)}_{\rightsquigarrow \Ncal(0, \Sigma^*)}
\end{equation}
The second term converges in distribution to the desired $\Ncal(0, \Sigma^*)$. Thus, it suffices to show that the first term is $o_p(1)$. We decompose the first term into two components and study them separately:
\begin{align}\label{beta-hat-decomp}
    & \sqrt{n}(\widehat{\gamma}-\widetilde{\gamma}) \\
    & = \sqrt{n} \widehat\E_n[\Xpi\Xpi^T]^{-1}\frac{1}{K}\sum_{k=1}^K\Big(\widehat\EE_k[\psi(Z, \widehat{e}\s k, \widehat{\alpha}\s k, \widehat{\nu}\s k)\Xpi] - \widehat\EE_k[\psi(Z, e^*, \alpha^*, \nu)\Xpi]\Big) \tag*{}\\
    &= \sqrt{n} \widehat\E_n[\Xpi\Xpi^T]^{-1}\frac{1}{K}\sum_{k=1}^K\underbrace{\left(
    \EE[\psi(Z, \widehat{e}\s k, \widehat{\alpha}\s k, \widehat{\nu}\s k)\Xpi] - \EE[\psi(Z, e^*, \alpha^*, \nu^*)\Xpi]\right)}_{\Lambda_{1, k}} \tag*{}\\
    & \quad + \sqrt{n} \widehat\E_n[\Xpi\Xpi^T]^{-1}\frac{1}{K}\sum_{k=1}^K\underbrace{(\widehat\EE_k - \EE)\left [(\psi(Z, \widehat{e}\s k, \widehat{\alpha}\s k, \widehat{\nu}\s k - \psi(Z, e^*, \alpha^*, \nu^*))\Xpi\right]}_{\Lambda_{2, k}} \tag*{}
\end{align}
We now show that $\Lambda_{1,k}$ and $\Lambda_{2,k}$ in the equation above are both $o_p(1/\sqrt{n})$. Let $\Sigma_{\phi}=\EE[\Xpi \Xpi^T]$. Then, each element of  $\Lambda_{1,k}$ is given by:
\begin{align*}
    (\Lambda_{1,k})_i &=  \EE[\psi(Z, \widehat{e}\s k, \widehat{\alpha}\s k, \widehat{\nu}\s k)(\Xpi)_i] - \EE[\psi(Z, e^*, \alpha^*, \nu^*)\Xpi_i]\\
    & = \EE[\EE[\psi(Z, \widehat{e}\s k, \widehat{\alpha}\s k, \widehat{\nu}\s k) - \psi(Z, e^*, \alpha^*, \nu^*)\mid X]\Xpi_i]\\
\Rightarrow  |(\Lambda_{1,k})_i| & \leq \|\EE[\psi(Z, \widehat{e}\s k, \widehat{\alpha}\s k, \widehat{\nu}\s k) - \psi(Z, e^*, \alpha^*, \nu^*)\mid X]\| \;\|\Xpi_i\| \tag{Cauchy-Schwartz}\\
& \lesssim  \Bigg\{\sum_{a=1}^1 \Bigg( \|\widehat{\kappa}\s k_a-\kappa_a^*\| \; \|\widehat{e}\s k-e^*\|
    +\sum_{i=1}^{m+1}\sum_{j=1}^{m+1}G_{ij}\|\widehat{\alpha}\s k_{a,i} - \alpha_{a, i}^*\| \;\|\widehat{\nu}\s k_{a,j} - \nu_{a, j}^*\|
    \\
    & \qquad \quad + \sum_{i=1}^{m+1}\sum_{j=1}^{m+1}H_{ij}\|\widehat{\nu}\s k_{a,i} - \nu^*_{a, i}\| \;\|\widehat{\nu}\s k_{a,j} - \nu_{a, j}^*\|
    \Bigg)\Bigg\} \sqrt{(\Sigma_{\phi})_{ii}} \tag{Thm. \ref{thm:condneym}}
\end{align*}
By the theorem's assumptions, we have that $\Lambda_{1,k}$ is $o_p(1/\sqrt{n})$, as desired.
We now see how we can control $\Lambda_{2,k}$. By Chebyshev's inequality, we have that $\Lambda_{2,k}$ is
\begin{equation}\label{eq:inference-lambda2}
    O_p\left(n^{-1/2}\sum_{i=1}^p \EE[(\psi(Z, \widehat{e}\s k, \widehat{\alpha}\s k, \widehat{\nu}\s k) - \psi(Z, e^*, \alpha^*, \nu^*))^2(\Xpi_{i})^2]^{1/2}\right)
\end{equation}
where we leveraged that $|\Ical_k|\simeq n/K$ and $K$ is a fixed integer constant that doesn't depend on $n$. The sum in the expression above further reduces to:
\begin{align*}
    & \sum_{i=1}^p \EE[(\psi(Z, \widehat{e}\s k, \widehat{\alpha}\s k, \widehat{\nu}\s k) - \psi(Z, e^*, \alpha^*, \nu^*))^2(\Xpi)_{i}^2]^{1/2}\\
    & \qquad \leq \sum_{i=1}^p \|\psi(Z, \widehat{e}\s k, \widehat{\alpha}\s k, \widehat{\nu}\s k) - \psi(Z, e^*, \alpha^*, \nu^*)\|\|\Xpi_{i}\|\\
    & \qquad = \sum_{i=1}^p \sqrt{(\Sigma_{\phi})_{ii}} \;\|\psi(Z, \widehat{e}\s k, \widehat{\alpha}\s k, \widehat{\nu}\s k) - \psi(Z, e^*, \alpha^*, \nu^*)\|
\end{align*}
Thus, it remains to study the convergence of $\|\psi(Z, \widehat{e}\s k, \widehat{\alpha}\s k, \widehat{\nu}\s k) - \psi(Z, e^*, \alpha^*, \nu^*)\|$. We note that: 
\begin{align*}
    \|\psi(Z, \widehat{e}\s k, \widehat{\alpha}\s k, \widehat{\nu}\s k) - \psi(Z, e^*, \alpha^*, \nu^*)\| & = \sum_{a=0}^1 \|\psi^a(Z, \widehat{e}\s k, \widehat{\alpha}\s k, \widehat{\nu}\s k) - \psi^a(Z, e^*, \alpha^*, \nu^*)\|\\
    & \leq \sum_{a=0}^1\Big(\|\psi^a(Z, \widehat{e}\s k, \widehat{\alpha}\s k, \widehat{\nu}\s k) - \psi^a(Z, e^*, \widehat{\alpha}\s k, \widehat{\nu}\s k)\|\\
    & \qquad + \|\psi^a(Z, e^*, \widehat{\alpha}\s k, \widehat{\nu}\s k) - \psi^a(Z, e^*, \alpha^*, \nu^*)\|\Big) \tag{Cauchy-Schwartz}
\end{align*}
where the $\psi^a$'s are defined in the proof of Theorem \ref{thm:condneym}. Without loss of generality, we study the convergence of $\psi^{1}$:
\begin{align*}
    & \|\psi^1(Z, \widehat{e}\s k, \widehat{\alpha}\s k, \widehat{\nu}\s k)- \psi^1(Z, e^*, \widehat{\alpha}\s k, \widehat{\nu}\s k)\| \\
    & \qquad = \left\|\frac{A(\widehat{e}\s k (X)-e^*(X))}{e^*(X)\widehat{e}\s k (X)}\sum_{i=1}^{m+1} \widehat{\alpha}\s k_{1, i}(X, \widehat{\nu}\s k_1)\rho_i(Y, \widehat{\nu}\s k_1)\right\|\\
    & \qquad \leq \frac{(m+1)c_4c_5}{c_1^2}\|\widehat{e}\s k-e^*\| \tag{from boundedness assumptions}\\
    & \qquad \lesssim \|\widehat{e}\s k-e^*\|\\
    & \|\psi^1(Z, e^*, \widehat{\alpha}\s k, \widehat{\nu}\s k) - \psi^1(Z, e^*, \alpha^*, \nu^*)\|\\
    & \qquad = \Bigg\|
                            \widehat{\kappa}\s k_1 (X) - \kappa_1^*(X) - \frac{A}{e^*(X)}\Bigg(\sum_{i=1}^{m+1} \widehat{\alpha}\s k_{1, i} (X, \widehat{\nu}\s k_1)\rho_i(Y, \widehat{\nu}\s k_1)  - \sum_{i=1}^{m+1} \alpha^*_{1, i}(X, \nu_1^*)\rho_i(Y, \nu_1^*)\Bigg)
                         \Bigg\|\\
    & \qquad \leq \|\widehat{\kappa}\s k_1-\kappa_1^*\| + \frac{c_5}{c_1}\sum_{i=1}^m \|\widehat{\alpha}\s k_{1, i} - \alpha^*_{1, i}\|\\
    & \qquad \lesssim \|\widehat{\kappa}\s k_1-\kappa_1^*\| + \sum_{i=1}^m \|\widehat{\alpha}\s k_{1, i} - \alpha^*_{1, i}\|
\end{align*}
Therefore:
\begin{align*}
    \|\psi(Z, \widehat{e}\s k, \widehat{\alpha}\s k, \widehat{\nu}\s k) - \psi(Z, e^*, \alpha^*, \nu^*)\|\lesssim \|\widehat{e}\s k-e^*\| + \sum_{a=0}^1 \left(\|\kappa\s k_a-\kappa^*_a\| + \sum_{i=1}^m \|\widehat{\alpha}\s k_{a, i} - \alpha^*_{a, i}\|\right)
\end{align*}
Given our assumptions on the nuisance convergence rates, the term above is $o_p(1)$. Putting everything together, we obtain that $\Lambda_{1,k}=o_p(1/\sqrt{n})$ and $\Lambda_{1,k}+\Lambda_{2,k}=o_p(1/\sqrt{n})$. Going back to Eq. \ref{beta-hat-decomp}, we have that:
\begin{align*}
    \sqrt{n}(\widehat{\gamma}-\widetilde{\gamma}) &= \sqrt{n}  \widehat\E_n[\Xpi\Xpi^T]^{-1}\frac{1}{K}\sum_{k=1}^K(\Lambda_{1,k} + \Lambda_{2,k})
\end{align*}
Using the continuous mapping theorem, we have that $\widehat\E_n[\Xpi\Xpi^T]^{-1}\overset{P}{\to} \Sigma_\phi^{-1}$. Finally, by Slutsky's theorem and the fact that $K$ does not grow with $n$, we obtain that $\sqrt{n}(\widehat{\gamma}-\widetilde{\gamma})$ is $o_p(1)$ and therefore:
\begin{equation*}
    \sqrt{n}(\widehat{\gamma}-\gamma^*) \rightsquigarrow \Ncal(0, \Sigma^*)
\end{equation*}
as desired.

\end{proof}

\section{APPLICATIONS OF THEOREM \ref{thm:condneym}}\label{sec:thm-app}

\subsection{Pseudo-outcomes and Rates for CQTE}

\begin{corollary}[Pseudo-outcomes and rates for CQTE]\label{cor:cqte-rates} Assume the distribution $F$ is continuous. Then, the pseudo-outcome for the conditional quantile treatment at level $\tau$, $\text{CQTE}(x;\tau)$, is given by: 
\begin{equation*}
    \psi^{\text{CQTE}}(Z, e, q, f) = q_1(X; \tau) - q_0(X;\tau) + \frac{A-e(X)}{e(X)(1-e(X))}\frac{1}{f_A(x)}\left(\tau-\II[Y\leq q_A(x;\tau)]\right)
\end{equation*}
where $f_a(x)=f_{Y\mid X=x, A=a}(q_a(x;\tau))$ is the conditional density of $Y$ evaluated at the conditional quantile.  Furthermore, if the conditions of Assumption \ref{cdte-assum} are satisfied, the oracle rate deviation is given by:
\begin{equation*}
    \mathcal{E} \leq \sum_{k=1}^K\sum_{a=0}^1\|\widehat{q}\s k_a-q^*_a\|\left(\|\widehat{e}\s k -e^*\|+\left\|\frac{1}{\widehat{f}\s k_a}-\frac{1}{f^*_a}\right\| + \|\widehat{q}\s k_a -q^*_a\|\right)
\end{equation*}
\end{corollary}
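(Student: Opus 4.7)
The plan is to specialize Theorem~\ref{thm:condneym} to the CQTE moment structure. First I would identify that CQTE fits Definition~\ref{def:statistic} with $m=0$ (no $h$'s) and $\rho(y,q) = \tau - \II[y \leq q]$, so that $\nu^*_a = \kappa^*_a = q^*_a(\cdot\,;\tau)$ is a scalar. Since $F_{Y\mid X,A=a}$ is continuous, $\EE[\rho(Y, q) \mid X=x, A=a] = \tau - F_{Y\mid X=x,A=a}(q)$, and differentiating in $q$ at $q^*_a(x;\tau)$ yields $J^*_a(x) = -f^*_a(x)$, so $\alpha^*_a(x) = -1/f^*_a(x)$. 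Substituting these into \cref{eq:pseudooutcome} and simplifying the sign (the minus in $\alpha^*$ cancels the minus in front of the correction term) reproduces exactly the pseudo-outcome $\psi^{\text{CQTE}}$ stated in the corollary.

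Next I would apply \cref{thm:condneym} fold-by-fold. Since $m+1=1$, the double sums over $i,j\in\overline{1,m+1}$ collapse to single terms, and the matrices $G,H\in\{0,1\}^{1\times 1}$ reduce to scalars. One checks $G_{11}=1$ since $|D_q\EE[\rho(Y,q)\mid X,A=a]| = |f_a(x;q)|$ is bounded (Assumption~\ref{cdte-assum} with continuity of $F$), and $H_{11}=1$ since the second derivative $|D_q D_q \EE[\rho(Y,q)\mid X,A=a]| = |f'_a(x;q)|$ is similarly bounded in a neighborhood of $q^*_a(x;\tau)$. Plugging $\alpha_a = -1/f_a$, $\nu_a = \kappa_a = q_a$ into the three generic error products then gives, for each fold $k$,
\[
\mathcal{E}(\hat e^{(k)},\hat \alpha^{(k)},\hat \nu^{(k)}) \lesssim \sum_{a=0}^1 \|\hat q^{(k)}_a - q^*_a\|\left(\|\hat e^{(k)} - e^*\| + \left\|\tfrac{1}{\hat f^{(k)}_a} - \tfrac{1}{f^*_a}\right\| + \|\hat q^{(k)}_a - q^*_a\|\right),
\]
with the three summands coming from the $\|\kappa_a-\kappa^*_a\|\|e-e^*\|$ term, the $G$-term, and the $H$-term respectively. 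Summing over $k\in\overline{1,K}$ gives the bound claimed in the corollary.

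The main obstacle I anticipate is justifying the second-derivative bound $H_{11}$, since $\rho(y,q) = \tau-\II[y\leq q]$ is itself discontinuous in $q$ and has no classical second derivative. The resolution is that Assumption~\ref{cdte-assum} only requires the derivatives of the \emph{conditional expectation} $q\mapsto \tau - F_a(q)$, which, under the stated continuity of $F$ (strengthened implicitly to smoothness of the conditional density $f^*_a$ in a neighborhood of the true quantile), are $-f^*_a$ and $-(f^*_a)'$. Provided these are uniformly bounded on the nuisance realization set $\Xi$ and the conditional density is bounded away from zero at the true quantile (so that $J^*_a(x)$ is invertible, matching the $c_4$ condition), Theorem~\ref{thm:condneym} applies verbatim and the rest is bookkeeping.
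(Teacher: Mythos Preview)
Your proposal is correct and follows essentially the same route as the paper: identify $\rho(y,q)=\tau-\II[y\le q]$ with $m=0$, compute $J_a^*(x)=-f_a^*(x)$ so $\alpha_a^*(x)=-1/f_a^*(x)$, plug into \cref{eq:pseudooutcome} to obtain the stated pseudo-outcome, and then invoke \cref{thm:condneym} with the $1\times1$ matrices $G,H$ to read off the three product terms. Your explicit discussion of why the $H$-term is legitimate (derivatives are taken on the smoothed map $q\mapsto\tau-F_a(q)$, not on $\rho$ itself) is a useful clarification that the paper leaves implicit.
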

\begin{remark}
We note that by Assumption \ref{cdte-assum}, $f_a$ and $\widehat{f}_a$ are bounded away from $0$, and thus the rate deviation can be written as $\sum_{k=1}^K\sum_{a=0}^1\|\widehat{q}\s k_a-q^*_a\|\left(\|\widehat{e}-e^*\|+\|\widehat{f}\s k_a-f^*_a\| + \|\widehat{q}\s k_a-q^*_a\|\right)$. When considering medians ($\alpha=0.5$), this result reduces to the result in \cite{leqi2021median} for median effects.
\end{remark}

\begin{proof}

For continuous CDFs, the conditional quantile at level $\tau$ is the solution to the moment:
\begin{equation*}
\EE[\tau-\II[Y\leq q_a(x;\tau)]\mid X=x, A=a]=0
\end{equation*}

The Jacobian is given by $J_a^*(X) = D_{q_a}\{\EE[\tau-\II[Y\leq q_a(X;\tau)]\mid X=x, A=a]\} \big\lvert_{q_a=q_a^*} = -f^*_a(x)$ where $f_a(x)=f_{Y\mid X=x, A=a}(q_a(x;\tau))$ is the conditional density evaluated at the conditional quantile. By Definition \ref{pseudo-def}, $\alpha^*_a(X)=-1/f^*_a(X)$ and the pseudo-outcome is given by:
\begin{equation*}
    \psi^{\text{CQTE}}(Z, e, q, f) = q_1(X; \tau) - q_0(X;\tau) + \frac{A-e(X)}{e(X)(1-e(X))}\frac{1}{f_A(x)}\left(\tau-\II[Y\leq q_A(x;\tau)]\right)
\end{equation*}
We apply Theorem \ref{thm:condneym} with $\alpha_a=1/f_a$, $\nu_a = q_a$ and obtain that the oracle deviation is given by:
\begin{equation*}
     \mathcal{E} \leq \sum_{k=1}^K\sum_{a=0}^1\|\widehat{q}\s k_a-q^*_a\|\left(\|\widehat{e}\s k -e^*\|+\left\|\frac{1}{\widehat{f}\s k_a}-\frac{1}{f^*_a}\right\| + \|\widehat{q}\s k_a -q^*_a\|\right)
\end{equation*}
as desired.

\end{proof}

\subsection{Pseudo-outcomes and Rates for CSQTE}

\begin{corollary}[Pseudo-outcomes and rates for CSQTE]\label{cor:csqte-rates} Assume the distribution $F$ is continuous. Then, the pseudo-outcome for the conditional super-quantile treatment at level $\tau$, $\text{CSQTE}(x;\tau)$, is given by: 
\begin{align*}
    \psi^{\text{CSQTE}}(Z, e, \mu, q) & = \mu_1(X; \tau) - \mu_0(X;\tau)\\
    & \quad + \frac{A-e(X)}{e(X)(1-e(X))} \Big(
     q_A(X;\tau) + \frac{1}{1-\tau}(Y-q_A(X;\tau))\II[Y\geq q_A(X;\tau)] - \mu_A(X;\tau)\Big)
\end{align*}
Furthermore, if the conditions of Assumption \ref{cdte-assum} are satisfied, the oracle rate deviation is given by:
\begin{equation*}
    \mathcal{E} \leq \sum_{k=1}^K\sum_{a=0}^1\left(\|\widehat{\mu}\s k_a-\mu^*_a\|\|\widehat{e}\s k -e^*\|+\|\widehat{q}\s k_a-q^*_a\|^2\right)
\end{equation*}
\end{corollary}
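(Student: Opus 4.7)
The plan has two stages: first derive the explicit CSQTE pseudo-outcome from Definition \ref{pseudo-def} applied to the moment vector $\rho(y,\mu,q)$ from (\ref{eq:csqte-moment}), and then specialize the oracle-deviation bound of Theorem \ref{thm:condneym} to this choice of $\rho$, exploiting the special structure of $\alpha^*_a$.

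For the first stage, I would differentiate $\E[\rho(Y,\mu,q)\mid X=x,A=a]$ with respect to $(\mu,q)$. Using continuity of $F_{Y\mid X,A=a}$ to differentiate the indicator-laden integrands under the expectation, this yields the triangular Jacobian
\begin{equation*}
J^*_a(X)=\begin{pmatrix} -1 & -(1-\tau)^{-1}q^*_a(X;\tau)\,f^*_a(X) \\ 0 & -f^*_a(X) \end{pmatrix},
\end{equation*}
where $f^*_a(X)=f_{Y\mid X,A=a}(q^*_a(X;\tau))$. Inverting and reading off the first row gives $\alpha^*_a(X)=(-1,\,(1-\tau)^{-1}q^*_a(X;\tau))$, a function of $q^*_a$ alone with \emph{no} density term. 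Substituting into Definition \ref{pseudo-def}, expanding $\alpha_A^T\rho(Y,\nu_A)$, and using the a.s.\ identity $\II[Y\leq q_a]=1-\II[Y\geq q_a]$ under continuity, the debiasing term collapses to $q_A+(1-\tau)^{-1}(Y-q_A)\II[Y\geq q_A]-\mu_A$, recovering the stated pseudo-outcome.

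For the second stage, inspecting the first and second partials of $\E[\rho_i\mid X,A=a]$ shows that in Assumption \ref{cdte-assum} the nonzero patterns are $G=\bigl(\begin{smallmatrix}1 & 1\\ 0 & 1\end{smallmatrix}\bigr)$ and $H=\bigl(\begin{smallmatrix}0 & 0\\ 0 & 1\end{smallmatrix}\bigr)$, since $\rho_1$ is linear in $\mu$, $\rho_2$ is independent of $\mu$, and only second derivatives in $q$ are nonzero (all bounded via Assumption \ref{cdte-assum}). Theorem \ref{thm:condneym} then decomposes into a $\kappa$-term $\|\hat\mu_a-\mu^*_a\|\|\hat e-e^*\|$, three $G$-terms, and an $H$-term $\|\hat q_a-q^*_a\|^2$. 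The two $G$-terms with $i=1$ vanish because $\alpha^*_{a,1}\equiv -1$ is a known constant so we may take $\hat\alpha_{a,1}\equiv -1$; the remaining $G$-term reduces to $(1-\tau)^{-1}\|\hat q_a-q^*_a\|^2$ since $\hat\alpha_{a,2}-\alpha^*_{a,2}=(1-\tau)^{-1}(\hat q_a-q^*_a)$. Summing over $a\in\{0,1\}$ and folds $k\in\overline{1,K}$ and absorbing constants into $\lesssim$ yields the claimed bound. The only delicate step is the Jacobian computation itself, needing continuity of $F_{Y\mid X,A=a}$ to bring $\partial/\partial q$ inside the expectation; the structural payoff is that $\alpha^*_a$ depends only on $q^*_a$, so no separate conditional-density nuisance enters the rate, in sharp contrast to the CQTE case (Corollary \ref{cor:cqte-rates}).
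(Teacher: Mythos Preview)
Your proposal is correct and follows essentially the same approach as the paper's own proof: compute the triangular Jacobian $J^*_a$ of the CSQTE moment vector, read off $\alpha^*_a=(-1,(1-\tau)^{-1}q^*_a)$ from the first row of its inverse, substitute into Definition~\ref{pseudo-def} to obtain the pseudo-outcome, and then specialize Theorem~\ref{thm:condneym} with $G=\bigl(\begin{smallmatrix}1&1\\0&1\end{smallmatrix}\bigr)$, $H=\bigl(\begin{smallmatrix}0&0\\0&1\end{smallmatrix}\bigr)$, using that $\alpha^*_{a,1}\equiv -1$ is constant and $\alpha^*_{a,2}$ is a known function of $q^*_a$. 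Your observation that the surviving $G$-term collapses to a multiple of $\|\hat q_a-q^*_a\|^2$ because $\hat\alpha_{a,2}-\alpha^*_{a,2}=(1-\tau)^{-1}(\hat q_a-q^*_a)$ is exactly the mechanism the paper exploits, and your remark about the contrast with CQTE (where the density nuisance does enter) is apt.
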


\begin{proof}
From Example \ref{sec:csqte}, we have that for continuous CDFs, the conditional super-quantile at level $\tau$ is the solution to the conditional moments:
\begin{align*}
    & \EE[(1-\tau)^{-1}Y\II[Y\geq q_a(x;\tau)] - \mu_a(x;\tau)\mid X=x, A=a]=0\\
    & \EE[\tau-\II[Y\leq q_a(x;\tau)]\mid X=x, A=a]=0
\end{align*}
Thus, the Jacobian $J_a^*(X)$ and its inverse if given by:
\begin{equation*}
    J_a^*(X) = \begin{pmatrix}
    -1 & - (1-\tau)^{-1}q_a(X;\tau)f_a(X)\\
    0 & - f_a(X)
    \end{pmatrix}, \quad 
(J_a^*(X))^{-1} = \begin{pmatrix}
    -1 & (1-\tau)^{-1}q_a(X;\tau)\\
    0 & - 1/f_a(X)
    \end{pmatrix}
\end{equation*}
where $f_a(x)=f_{Y\mid X=x, A=a}(q_a(x;\tau))$ is the conditional density evaluated at the conditional quantile. By Definition \ref{pseudo-def} with $\alpha_a = (-1, -(1-\tau)^{-1}q_a(X;\tau))$, $\nu_a = (\mu_a, q_a)$, the pseudo-outcome is given by:
\begin{align*}
    \psi^{\text{CSQTE}}(Z, e, \mu, q) & = \mu_1(X; \tau) - \mu_0(X;\tau)+\frac{A-e(X)}{e(X)(1-e(X))}\frac{1}{1-\tau} \Big(\\
    & \quad Y\II[Y\geq q_A(X;\tau)] - (1-\tau)\mu_A(X;\tau)
    -q_A(X;\tau)\left(\tau - \II[Y\leq q_A(X;\tau)]\right)\Big)\\
    & = \mu_1(X; \tau) - \mu_0(X;\tau)\\
    & \quad + \frac{A-e(X)}{e(X)(1-e(X))} \Big(
     q_A(X;\tau) + \frac{1}{1-\tau}(Y-q_A(X;\tau))\II[Y\geq q_A(X;\tau)] - \mu_A(X;\tau)\Big)
\end{align*}
We now apply Theorem \ref{thm:condneym}. We first note that the binary matrices $G$ and $H$ are given by:
\begin{equation*}
    G = \begin{pmatrix}
    1 & 1\\
    0 & 1
    \end{pmatrix},\quad
    H = \begin{pmatrix}
    0 & 0\\
    0 & 1
    \end{pmatrix}
\end{equation*}
Moreover, $\|\widehat{\alpha}\s k_{a, 1} - \alpha^*_{a, 1}\|=0$ since $\alpha_{a, 1}=-1$ is a constant. Therefore, in the cross-products $\|\widehat{\alpha}\s k_{a, i} - \alpha^*_{a, i}\|\|\widehat{\nu}\s k_{a, j} - \nu^*_{a, j}\|$, the only surviving term is $\|\widehat{\alpha}\s k_{a, 2} - \alpha^*_{a, 2}\|\|\widehat{\nu}\s k_{a, 2} - \nu^*_{a, 2}\|=\|\widehat{q}\s k_{a} - q^*_a\|^2$ due to $G_{21}=0$.  Furthermore, given $H$, the only surviving term in the cross-products $\|\widehat{\nu}\s k_{a, i} - \nu^*_{a, i}\|\|\widehat{\nu}\s k_{a, j} - \nu^*_{a, j}\|$ is $\|\widehat{\nu}\s k_{a, 2} - \nu^*_{a, 2}\|^2=\|\widehat{q}\s k_{a} - q^*_a\|^2$. Thus, the oracle deviation is given by: 
\begin{equation*}
    \mathcal{E} \leq \sum_{k=1}^K\sum_{a=0}^1\left(\|\widehat{\mu}\s k_a-\mu^*_a\|\|\widehat{e}\s k -e^*\|+\|\widehat{q}\s k_a-q^*_a\|^2\right)
\end{equation*}
as desired.
\end{proof}

\subsection{Pseudo-outcomes and Rates for C$f$RTE}\label{sec:cklrte-rates}

\begin{corollary}[Pseudo-outcomes and rates for C$f$RTE]\label{cor:cfrte-rates} The conditional $f$-risk treatment effect at level $\delta$, $\text{C}f\text{RTE}(x;\delta)$, is given by: 
\begin{align*}
    \psi^{\text{C}f\text{RTE}}(Z, e, R, \beta, \lambda) = R^f_1(X; \delta) - R^f_0(X;\delta)+\frac{A-e(X)}{e(X)(1-e(X))}(m(Y, \beta_A, \lambda_A;\delta) - R^f_A(X; \delta))
\end{align*}
Furthermore, if the conditions of Assumption \ref{cdte-assum} are satisfied, the oracle rate deviation is given by:
\begin{align*}
    & \mathcal{E} \leq \sum_{k=1}^K\sum_{a=0}^1\Big(\|\widehat{R}^{f, (k)}_a-R^{f,*}_a\|\|\widehat{e}\s k-e^*\| + \|\widehat{\beta}\s k_a-\beta_a^*\|^2 + \|\widehat{\lambda}\s k_a-\lambda_a^*\|^2 + \|\widehat{\beta}\s k_a-\beta_a^*\|\|\widehat{\lambda}\s k_a-\lambda_a^*\|\Big)
\end{align*}
\end{corollary}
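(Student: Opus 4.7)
The plan is to instantiate C$f$RTE within the framework of Definition \ref{pseudo-def} by setting $\nu_a = (R^f_a, \beta_a, \lambda_a)$ with $\kappa_a = R^f_a$ (so $m = 2$) and $\rho$ as displayed in Section \ref{sec:cfrte}. The main preliminary task is to compute the Jacobian $J^*_a(X)$ and read off its first inverse row as $\alpha^*_a(X)$. Since $\rho_1 = m(Y, \beta, \lambda; \delta) - R^f$ depends linearly on $R^f$ while $\rho_2 = \partial m/\partial\beta$ and $\rho_3 = \partial m/\partial\lambda$ do not involve $R^f$ at all, the entire first column of $J^*_a$ collapses to $(-1, 0, 0)^\top$. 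Moreover, since $(\beta^*_a(X), \lambda^*_a(X))$ are interior minimizers of the convex program $\inf_{\beta \geq 0,\,\lambda} \EE[m(Y, \beta, \lambda; \delta) \mid X, A=a]$, the first-order optimality conditions $\EE[\rho_2 \mid X, A=a] = \EE[\rho_3 \mid X, A=a] = 0$ hold at the optimum after commuting differentiation and conditional expectation (justified by dominated convergence under Assumption \ref{cdte-assum}), so the first row of $J^*_a$ is also $(-1, 0, 0)$. Thus $J^*_a$ has block-diagonal form with a scalar $-1$ block and a $2\times 2$ Hessian block, and its inverse has first row $(-1, 0, 0)$. Substituting $\alpha^*_a \equiv (-1, 0, 0)$ into the pseudo-outcome formula \cref{eq:pseudooutcome} and simplifying $-(-1)\cdot\rho_1 = m - R^f$ produces the claimed expression.

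For the oracle-rate deviation, I would apply Theorem \ref{thm:condneym} to these nuisances. The crucial observation is that $\alpha^*_a \equiv (-1, 0, 0)$ is a \emph{known, constant} function, so the natural choice $\widehat\alpha\s k_a \equiv \alpha^*_a$ makes every $\|\widehat\alpha\s k_{a,i} - \alpha^*_{a,i}\| = 0$; consequently the entire $G$-weighted double sum in \cref{eq:E} vanishes irrespective of $G$. For the $H$-weighted sum, I would inspect second derivatives of each $\rho_i$ in $(R^f, \beta, \lambda)$: since $\rho_1$ is linear in $R^f$ and $\rho_2, \rho_3$ do not depend on $R^f$, every second derivative involving $R^f$ is identically zero, so only the lower-right $2 \times 2$ block of $H$ (indexed by $\beta$ and $\lambda$) can be nonzero. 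The surviving cross-products are therefore $\|\widehat\beta\s k_a - \beta^*_a\|^2$, $\|\widehat\lambda\s k_a - \lambda^*_a\|^2$, and $\|\widehat\beta\s k_a - \beta^*_a\|\|\widehat\lambda\s k_a - \lambda^*_a\|$ (the two off-diagonal entries combining into a single such term up to the $\lesssim$ constants). Adding the always-present $\|\kappa_a - \kappa^*_a\|\|e - e^*\| = \|\widehat R^{f,(k)}_a - R^{f,*}_a\|\|\widehat e\s k - e^*\|$ contribution and summing over $a \in \{0,1\}$ and folds $k \in \overline{1, K}$ recovers the stated bound exactly.

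I expect the main obstacle to be the rigorous justification of the two first-order conditions $\EE[\partial_\beta m \mid X,A=a] = \EE[\partial_\lambda m \mid X,A=a] = 0$ at $(\beta^*_a, \lambda^*_a)$. Concretely, one needs enough smoothness of the convex conjugate $f^*$ in the interior of its effective domain, integrability of its derivatives (dominated via the boundedness in Assumption \ref{cdte-assum}), and the fact that $(\beta^*_a(X), \lambda^*_a(X))$ lies in the interior of $\{\beta\geq 0\}\times\RR$, so that differentiation under the integral sign is valid and the stationarity conditions of the inner convex program coincide with zeros of $\EE[\rho_2\mid X,A=a]$ and $\EE[\rho_3\mid X,A=a]$. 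These are exactly the ``appropriate regularity'' conditions flagged in Section \ref{sec:cfrte}. Once they are in place, everything else is a direct specialization of Theorem \ref{thm:condneym} using the structural zeros of $\alpha^*$ and of $H$ identified above.
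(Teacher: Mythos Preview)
Your proposal is correct and follows essentially the same route as the paper: compute the Jacobian of the $\rho$-moments, observe its block-diagonal structure so that $\alpha^*_a=(-1,0,0)$, plug into \cref{eq:pseudooutcome}, and then specialize \cref{thm:condneym} using the vanishing of the $G$-terms (since $\widehat\alpha=\alpha^*$) and the zero first row/column of $H$. Your explicit justification that the $(1,2)$ and $(1,3)$ entries of $J^*_a$ vanish because $\partial_\beta\EE[\rho_1\mid X,A]=\EE[\rho_2\mid X,A]=0$ and $\partial_\lambda\EE[\rho_1\mid X,A]=\EE[\rho_3\mid X,A]=0$ at the optimum is exactly the mechanism the paper uses (it simply records those zeros in the displayed Jacobian without commentary), and your flagged regularity concern is precisely the ``appropriate regularity'' caveat in \cref{sec:cfrte}.
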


\begin{proof}
From Example \ref{sec:cfrte}, we have that the C$f$RTE is identified by the following conditional moments:
\begin{align*}
    & \EE[m(Z, \beta_a, \lambda_a; \delta) - R^f_a(x;\delta)\mid X=x, A=a] = 0 \\
    & \EE\left[\frac{\partial}{\partial \beta_a}m(Z, \beta_a, \lambda_a; \delta) \bigg\lvert X=x, A=a\right] = 0\\
    & \EE\left[\frac{\partial}{\partial \lambda_a}m(Z, \beta_a, \lambda_a; \delta)\bigg\lvert X=x, A=a\right] = 0
\end{align*}
where it is understood that $\beta_a$ and $\lambda_a$ are functions of $X$. Thus, the Jacobian $J_a^*(X)$ and its inverse if given by:
\begin{equation*}
J_a^*(X) =
\begin{pmatrix}
     -1 & 0 & 0\\
    0 & \scriptstyle-\EE\left[\frac{(Y-\lambda_a(X))^2}{\beta_a(X)^3}(f^*)''\left(\frac{Y-\lambda_1(X)}{\beta_a(X)}\right)\big| X=x, A=1\right] & \scriptstyle \EE\left[\frac{Y-\lambda_a(X)}{\beta_a(X)^2}(f^*)''\left(\frac{Y-\lambda_a(X)}{\beta_a(X)}\right)\big| X=x, A=1\right] \\
    0 & \scriptstyle \EE\left[\frac{Y-\lambda_a(X)}{\beta_a(X)^2}(f^*)''\left(\frac{Y-\lambda_a(X)}{\beta_a(X)}\right)\big| X=x, A=1\right] & \scriptstyle -\EE\left[\frac{1}{\beta_a(X)}(f^*)''\left(\frac{Y-\lambda_a(X)}{\beta_a(X)}\right)\big| X=x, A=1\right]
    \end{pmatrix}
\end{equation*}

\begin{equation*}
(J_a^*(X))^{-1} = \begin{pmatrix}
    -1 & 0 & 0\\
    0 & ... & ...\\
    0 & ... & ...
    \end{pmatrix}
\end{equation*}
 By Definition \ref{pseudo-def} with $\alpha_a = (-1, 0, 0)$, $\nu_a = (R^f_a, \beta_a, \lambda_a)$, the pseudo-outcome is given by:
\begin{align*}
    \psi^{\text{C}f\text{RTE}}(Z, e, R, \beta, \lambda) = R^f_1(X; \delta) - R^f_0(X;\delta)+\frac{A-e(X)}{e(X)(1-e(X))}(m(Y, \beta_A, \lambda_A;\delta) - R^f_A(X; \delta))
\end{align*}
We now apply Theorem \ref{thm:condneym}. We first note that the binary matrices $G$ and $H$ are given by:
\begin{equation*}
     G = \begin{pmatrix}
    1 & 0 & 0\\
    0 & 1 & 1\\
    0 & 1 & 1
    \end{pmatrix},\quad
    H = \begin{pmatrix}
    0 & 0 & 0\\
    0 & 1 & 1\\
    0 & 1 & 1
    \end{pmatrix}
\end{equation*}
Moreover, $\|\widehat{\alpha}\s k_{a, i} - \alpha^*_{a, i}\|=0$ since the $\alpha_{a, i}$'s are constants. Therefore, the cross-products $\|\widehat{\alpha}\s k_{a, i} - \alpha^*_{a, i}\|\|\widehat{\nu}\s k_{a, j} - \nu^*_{a, j}\|$ vanish. Furthermore, given $H$, the only surviving terms in the cross-products $\|\widehat{\nu}\s k_{a, i} - \nu^*_{a, i}\|\|\widehat{\nu}\s k_{a, j} - \nu^*_{a, j}\|$ involve only $\beta_a$ and $\lambda_a$. Thus, the oracle deviation is given by:
\begin{align*}
    & \mathcal{E} \leq \sum_{k=1}^K\sum_{a=0}^1\Big(\|\widehat{R}^{f, (k)}_a-R^*_a\|\|\widehat{e}\s k-e^*\| + \|\widehat{\beta}\s k_a-\beta_a^*\|^2 + \|\widehat{\lambda}\s k_a-\lambda_a^*\|^2 + \|\widehat{\beta}\s k_a-\beta_a^*\|\|\widehat{\lambda}\s k_a-\lambda_a^*\|\Big)
\end{align*}\end{proof}

We further provide a specific example of C$f$RTE for $f(x)=x\log x$ which corresponds to the Kullback–Leibler (KL) divergence. The associated risk is known as the entropic value-at-risk (EVaR). We term this conditional $f$-risk treatment effect the CKLRTE. The convex conjugate of $f$ is given by $f^*(x^*)=e^{x^*-1}$. From the first-order optimality conditions, we obtain:
\begin{align}\label{eq:cklrte}
    \beta_a^*(x;\delta) &= {\arg\inf}_{\beta_a(x;\delta)\geq 0} \; \beta_a(x;\delta)\left(
    \log \EE\left[e^{\frac{Y}{\beta_a(x;\delta)}}\big| X=x, A=a\right] + \delta
    \right)\\
    R^{KL}(x;\delta) &= \beta^*_a(x;\delta)\left(
    \log \EE\left[e^{\frac{Y}{\beta^*_a(x;\delta)}}\big| X=x, A=a\right] + \delta
    \right)\tag*{}\\
    \lambda_a^*(x;\delta) &= \beta^*_a(x;\delta)\left(
    \log \EE\left[e^{\frac{Y}{\beta^*_a(x;\delta)}}\big| X=x, A=a\right] - 1
    \right)\tag*{}\\
    & = R^{KL}(x;\delta) - \beta_a^*(x;\delta) (\delta + 1)
\end{align}
Thus, the optimization problem contains only one variable of interest, $\beta_a(x;\delta)$. The pseudo-outcome for CKLRTE at level $\delta$ is then given by:
\begin{align*}
    \psi^{\text{CKLRTE}}(Z, e, R, \beta, \lambda) &= R^{KL}_1(X; \delta) - R^{KL}_0(X;\delta)\\
    & \quad +\frac{A-e(X)}{e(X)(1-e(X))}\left(\delta \beta_A(X;\delta) + \lambda_A(X;\delta) + \beta_A(X;\delta) e^{\frac{Y-\lambda_A(X;\delta)}{\beta_A(X;\delta)}-1} - R^{KL}_A(X; \delta)\right)
\end{align*}

\begin{remark} An interesting problem is the estimation of $\beta^*_a(X;\delta)$. We observe that the empirical analog of \cref{eq:cklrte} requires fitting a large number of regression functions $\widehat{\EE}_n\left[e^{\frac{Y}{\beta_a(x;\delta)}}\big| X=x, A=a\right]$, one for each candidate $\beta_a(x;\delta)$. This can be mitigated by instead learning similarity weights between $x$ and the training data and then replacing $\widehat{\EE}_n\left[e^{\frac{Y}{\beta_a(x;\delta)}}\big| X=x, A=a\right]$ with a weighted sum. We can then use this approximation with any convex optimization method since the argument of \cref{eq:cklrte} is a convex function in $\beta_a$. 
\end{remark}

\section{ADDITIONAL EXPERIMENTAL RESULTS}\label{sec:exp-results}

The replication code is distributed under an MIT license. The results in Section \ref{sec:empirical} were obtained using an Amazon Web Services instance with 32 vCPUs and 64 GiB of RAM. 

\subsection{Simulation Study}

In this section, we provide additional results on simulated data for CQTEs (\cref{sec:cqte}) and C$f$RTE (\cref{sec:cfrte}). Our analysis relies on the same data generating process and setup described in \cref{sec:empirical}. For nuisance and second-stage estimation, we use Python's \texttt{scikit-learn} library \citep{scikit-learn} for logistic and linear regression, random forest regression (RF) and linear quantile regression (LQR). We employ the extension \texttt{sklearn-quantile} for quantile random forest regression (QRF). We use the estimators' default hyperparameters except for the RFs where we set the minimum leaf size to $n/20$ to control overfitting. For each comparison, we run $100$ simulations for each sample size $n=100,200,\ldots,12800$ and evaluate the mean squared error (MSE) over a fixed set of 500 random $X$ values.

\noindent \textbf{CQTE Experiments.} We measure the conditional quantile treatment effect at level $\tau=0.75$. For the given DGP, the true CQTE at level $\tau$ is given by $q_1(X;\tau)-q_0(X;\tau)\simeq 1.14(e^{X_0+X_1}-e^{X_0})$, a heterogenous function of $X$. Learning CQTEs requires estimating three nuisances: the propensity score $e(X)$, the conditional quantile $q_a(X;\tau)$, as well as the density at the conditional quantile, $f_a(X)$. 

We estimate $e(X)$ using logistic regression. Likewise, we learn $f_a(X)$ using the method described in \cref{sec:nuisance-est} with a Gaussian kernel (bandwidth $b=1$) for the estimates $\omega_i$ and a random forest (RF) regressor as the final estimator $\widehat{\EE}_n[\omega\mid X=x,A=a]$. Finally, we consider three methods for estimating $q_a(X;\tau)$. First, we consider a quantile RF (QRF) \citep{meinshausen2006quantile} as the \textit{flexible} learner. Then, we consider a "\textit{misspecified}" model such as the linear quantile regressor (LQR) \citep{koenker2005quantile}. Lastly, we consider an estimator that uses a Gaussian kernel for calculating weights and then computes $\widehat{q}_a$ using the weighted version of the moment in \cref{eq:cqte-moment}. We choose the kernel bandwidth by Silverman’s rule \citep{silverman2018density}. This will be our \textit{slow} estimator as we expect it to suffer from the curse of dimensionality. For the final stage, we use an ordinary least squares model (CQTE+OLS) or a RF (CQTE+RF).

\begin{figure*}[t]
\vspace{-1em}
     \centering
     \begin{subfigure}[b]{0.29\textwidth}
         \centering
         \includegraphics[width=\textwidth]{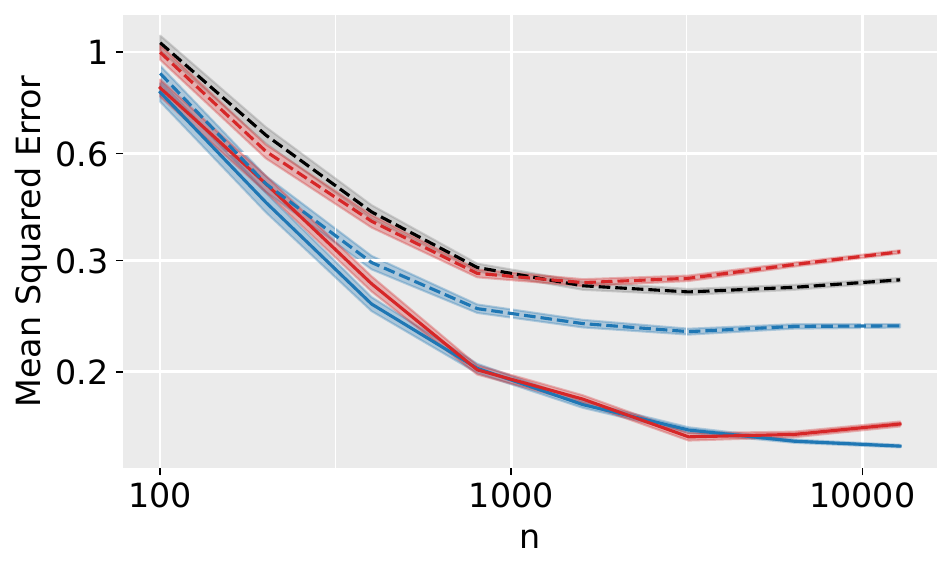}
     \end{subfigure}
     \begin{subfigure}[b]{0.28\textwidth}
         \centering
         \includegraphics[width=\textwidth]{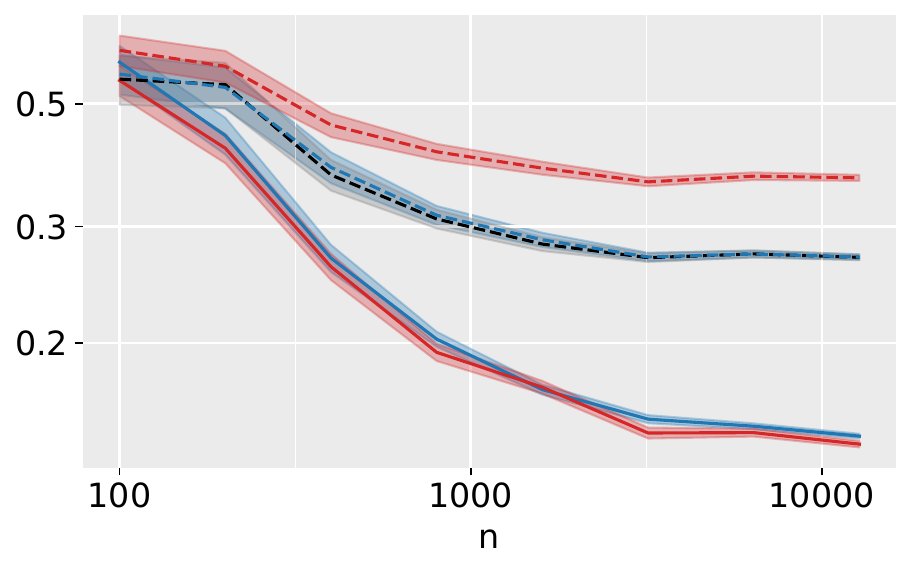}
     \end{subfigure}
     \begin{subfigure}[b]{0.39\textwidth}
         \centering
         \includegraphics[width=\textwidth]{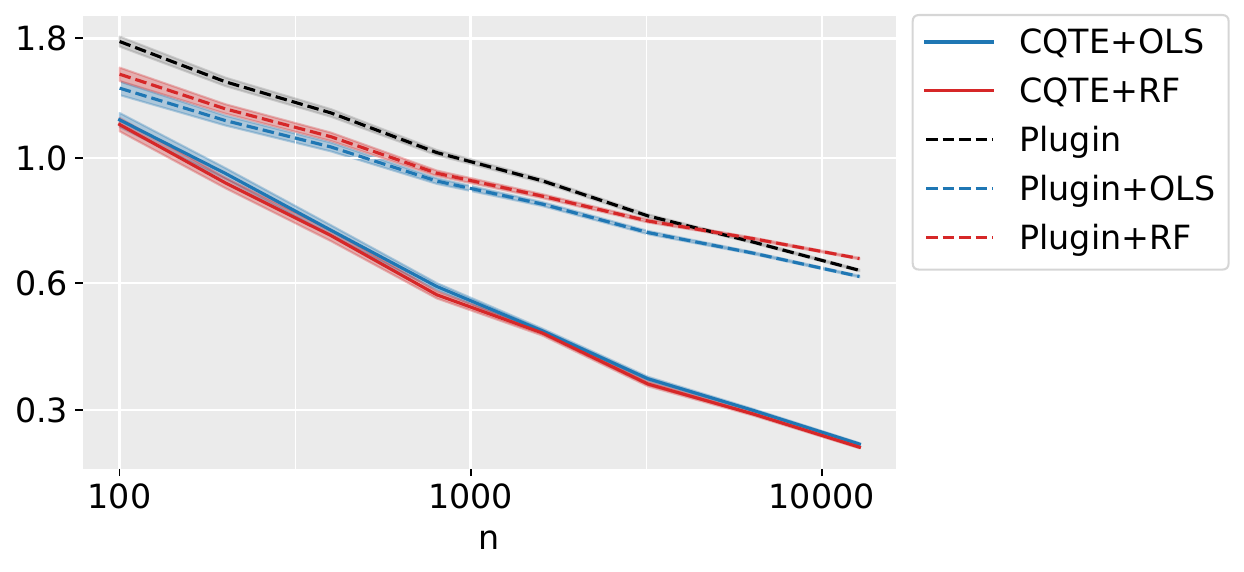}
     \end{subfigure}\\
     \begin{subfigure}[b]{0.29\textwidth}
         \centering
         \includegraphics[width=\textwidth]{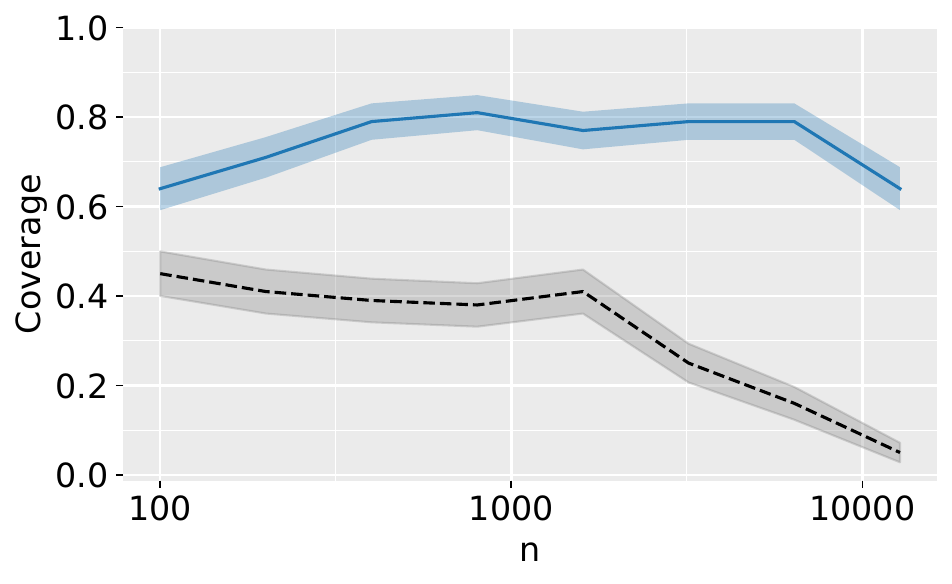}
         \caption{Flexible $\hat{q}$}
     \end{subfigure}
     \begin{subfigure}[b]{0.28\textwidth}
         \centering
         \includegraphics[width=\textwidth]{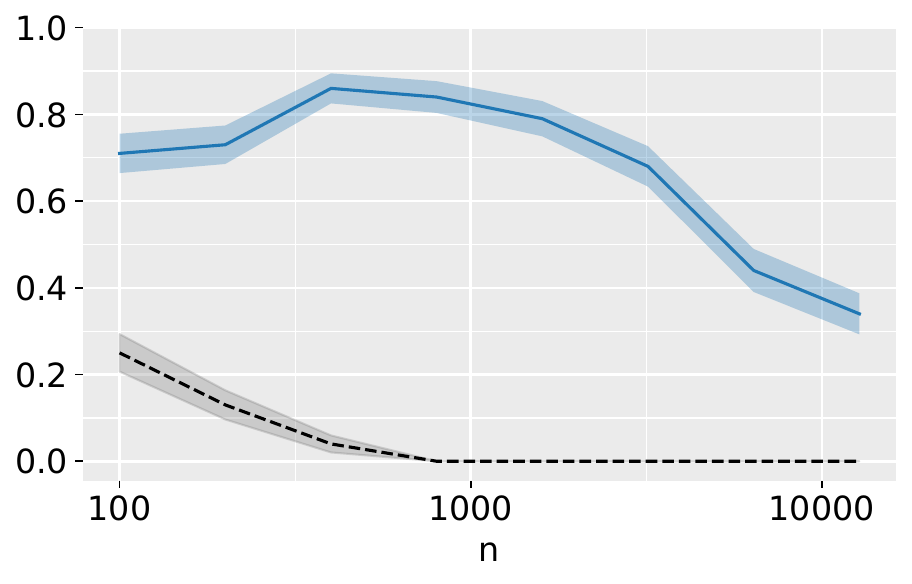}
         \caption{Misspecified $\hat{q}$}
     \end{subfigure}
     \begin{subfigure}[b]{0.39\textwidth}
         \centering
         \includegraphics[width=\textwidth]{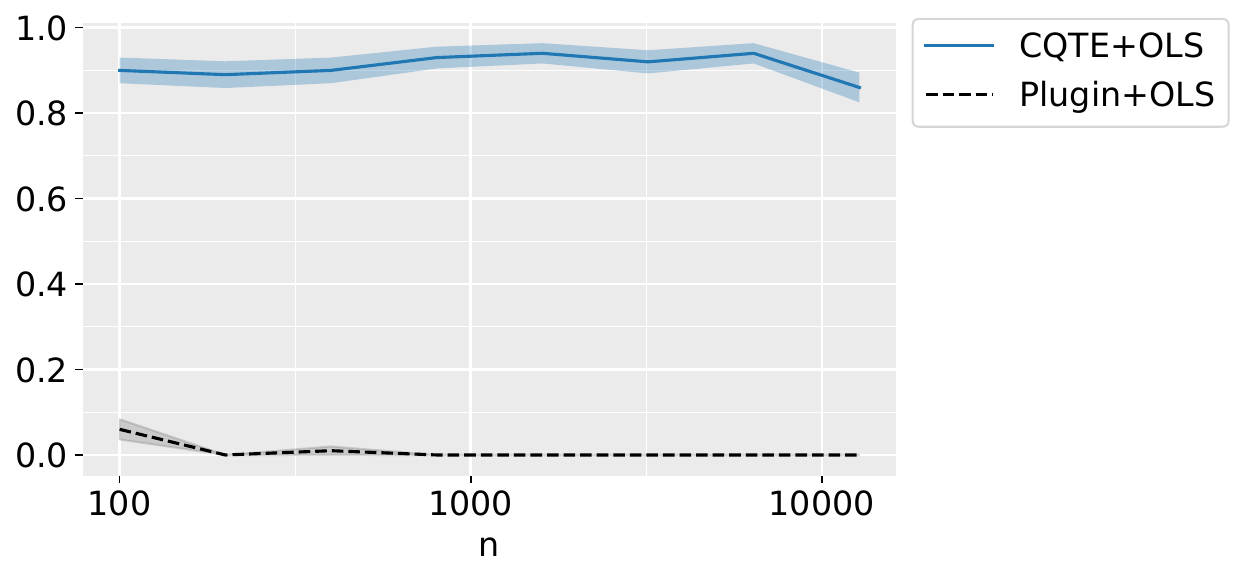}
         \caption{Slow $\hat{q}$}
     \end{subfigure}
        \caption{Mean squared error (MSE) and confidence 95\% interval coverage for different CQTE learners. Shaded regions depict plus/minus one standard error over $100$ simulations.}
        \label{fig:cqte-sims}
        \vspace{-1em}
\end{figure*}

For the performance comparisons, we proceed similarly to \cref{sec:empirical}. Thus, we compare the out-of-sample MSE of the CQTE estimator with that of the plug-in estimator from Eq. \ref{plugin-alg}. We also construct Plugin+OLS and Plugin+RF given by running an additional OLS/RF model on the cross-fitted plug-in predictions. We do so to account for additional smoothing from the last stage regressor. When the second stage algorithm is OLS, we check whether the 95\%-confidence interval OLS returns for the $X_1$ coefficient contains the coefficient from the true projection. The results are shown in  \Cref{fig:cqte-sims}. Our CQTE learner provides uniformly better MSE performance that the plugin counterparts, and the results show this is not just a consequence of the second-stage regression. For inference, we achieve better coverage than the plug-in approaches. However, for the flexible and "misspecified" estimator, the coverage does not reach the desired level of 95\%. This is most likely due to misspecifcation in the conditional density learner. \textit{Note:} we put \textit{misspecified} in quotes since the MSE of this estimator is better than that of the \textit{flexible} estimator. We attribute this to the fact that for small $t$ we can approximate $e^t\simeq 1+t$ which makes the true CQTE close to a linear approximation.

\noindent \textbf{C$f$RTE Experiments.} We now turn to estimating the conditional $f$-risk treatment effects when we set the $f$-divergence to be the KL divergence. This $f$-risk measures the difference between conditional entropic values-at-risk (EVaRs). We call this risk treatment effect the CKLRTE (see \cref{sec:cklrte-rates}). We now wish to measure the CKLRTE at level $\tau=0.75$ ($\delta=-\log(1-\tau)$). We note that our DGP does not admit EVaRs as the moment generating function of a lognormal distribution diverges. Thus, we modify the DGP to truncate any values above the 99$^\text{th}$ conditional quantile. For the truncated DGP, the true CKLRTE at level $\delta$ is given by $R^{KL}_1(X;\delta)-R^{KL}_0(X;\delta)\simeq 1.42(e^{X_0+X_1}-e^{X_0})$, a heterogeneous function of $X$. Learning CKLRTEs requires estimating the following nuisances: the propensity score $e(X)$, the conditional EVaR $R^{KL}_a(X;\delta)$ and the $\beta_a(X; \delta)$ optimization parameter.

As before, we estimate $e(X)$ using logistic regression. We learn $R^{KL}_a(X;\delta)$ and the $\beta_a(X; \delta)$ optimization parameter jointly via the procedure described in  \cref{sec:cklrte-rates}. Namely, we learn the weights necessary for computing a weighted average version of $\widehat{\EE}_n\left[e^{\frac{Y}{\beta_a(x;\delta)}}\big| X=x, A=a\right]$ and we then solve the convex optimization problem in \cref{eq:cklrte}. We use two models to calculate the weights: a random forest (\textit{flexible} learner, most likely to capture complex functions) and a Gaussian kernel (\textit{slow} learner, most likely suffers from the curse of dimensionality). We choose the Gaussian kernel bandwidth by Silverman’s rule \citep{silverman2018density}. For the final stage, we use an OLS model (CKLRTE+OLS) or a RF (CKLRTE+RF).

 Similar to our other experimental benchmarks, we compare the out-of-sample MSE of the CKLRTE estimator with that of the plug-in estimator from Eq. \ref{plugin-alg}. We also construct Plugin+OLS and Plugin+RF given by running an additional OLS/RF model on the cross-fitted plug-in predictions. This will account for any additional smoothing from the last stage regressor. When the second stage algorithm is OLS, we check whether the 95\%-confidence interval for the $X_1$ coefficient contains the coefficient from the true projection. We display the results in  \Cref{fig:cklrte-sims}. Our CKLRTE learner provides uniformly better MSE performance than the plugin counterparts, regardless of the second stage regression. For inference, we achieve good coverage whereas plug-in approaches yield little to no coverage. 

\begin{figure*}[t]
\vspace{-1em}
     \centering
     \begin{subfigure}[b]{0.29\textwidth}
         \centering
         \includegraphics[width=\textwidth]{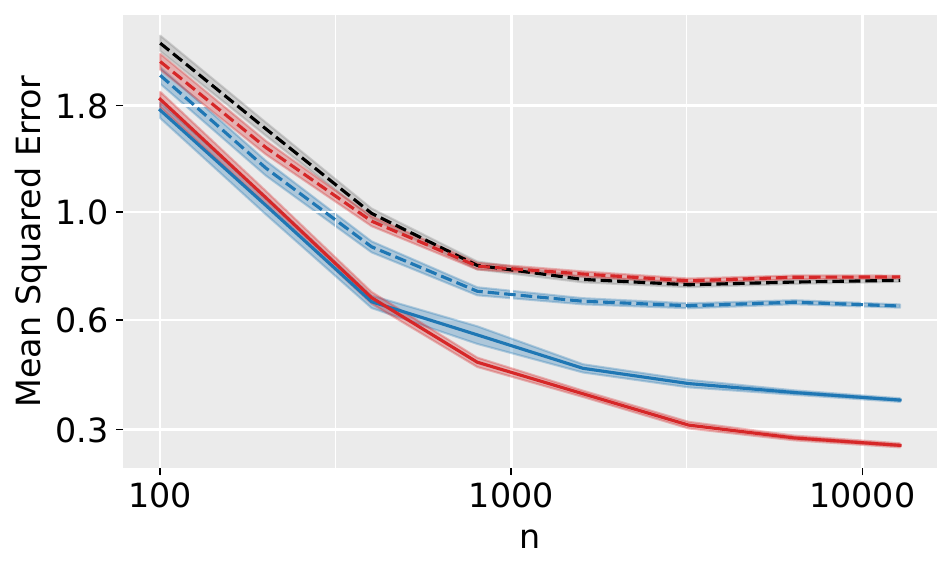}
     \end{subfigure}
     \begin{subfigure}[b]{0.39\textwidth}
         \centering
         \includegraphics[width=\textwidth]{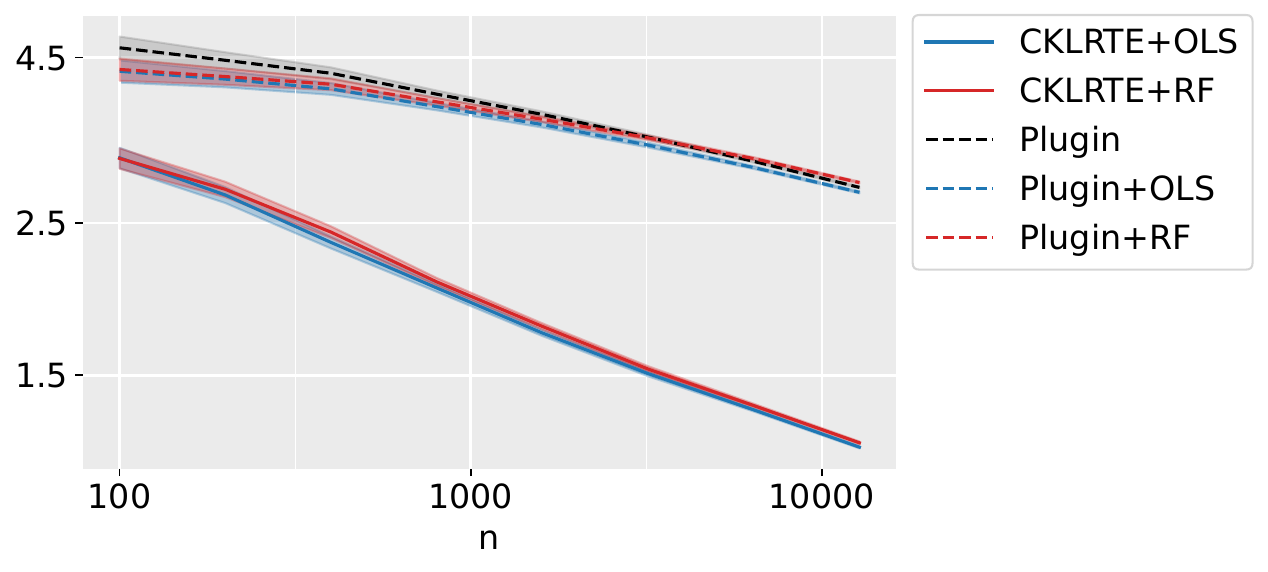}
     \end{subfigure}\\
     \begin{subfigure}[b]{0.29\textwidth}
         \centering
         \includegraphics[width=\textwidth]{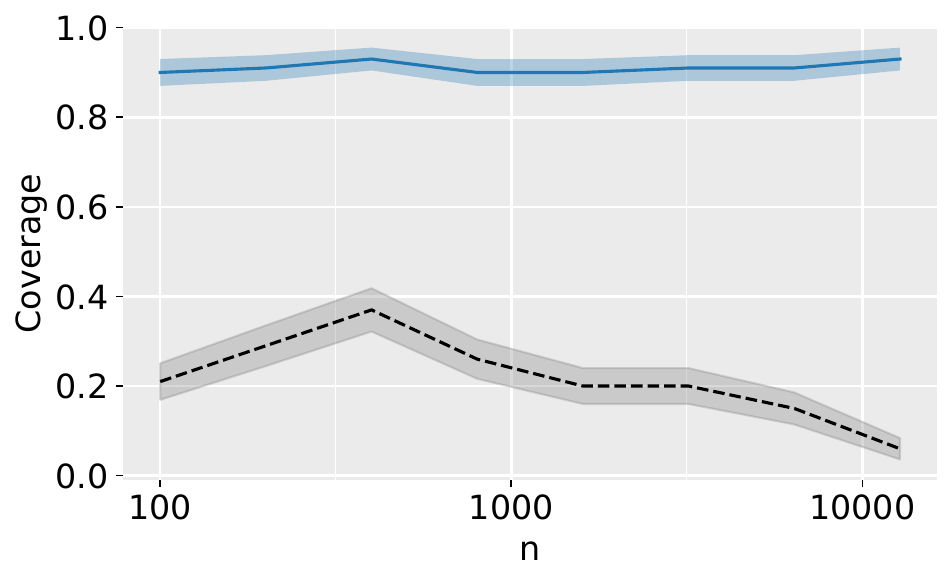}
         \caption{Flexible $\widehat{R}^{KL}$}
     \end{subfigure}
     \begin{subfigure}[b]{0.39\textwidth}
         \centering
         \includegraphics[width=\textwidth]{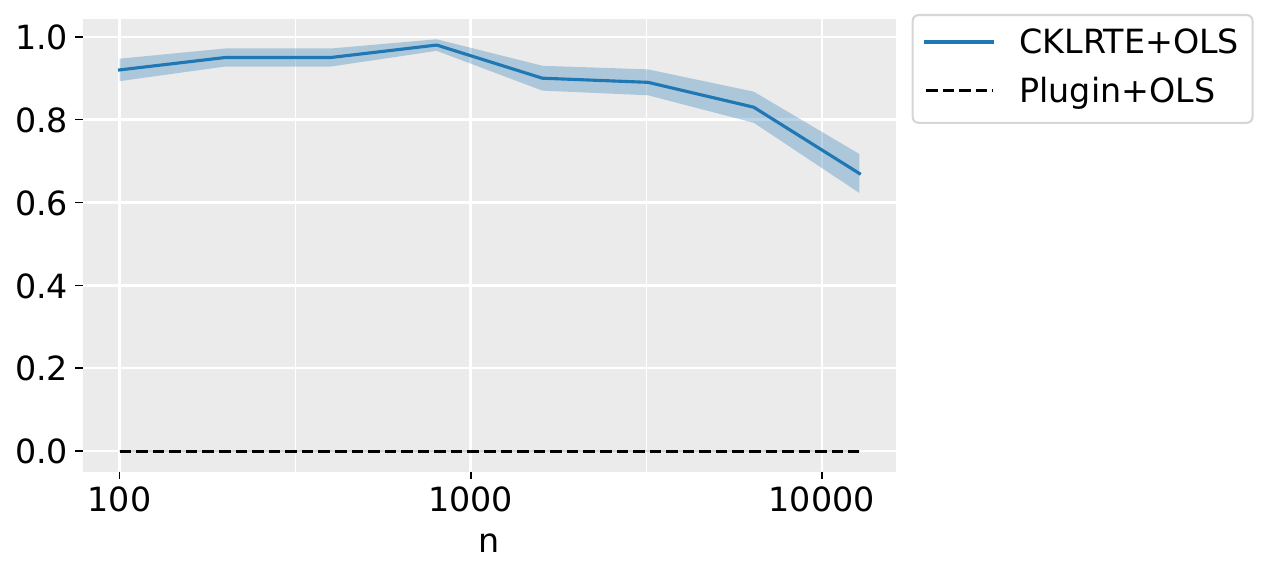}
         \caption{Slow $\widehat{R}^{KL}$}
     \end{subfigure}
        \caption{Mean squared error (MSE) and 95\% confidence interval coverage for different CKLRTE learners. Shaded regions depict plus/minus one standard error over $100$ simulations.}
        \label{fig:cklrte-sims}
\end{figure*}

\subsection{Impact of 401(k) Eligibility on Financial Wealth}

We provide a detailed description the 401(k) dataset features in \cref{tab:401k_feats}. We then compare the feature importances given by the random forest final stage of the CSQTE estimators and the DR-Learner. \Cref{fig:app-401k-rf-feature-importance} shows that the features driving the treatment effects for the three estimators have the same importance profile across tasks. For example, income, age and education are the most important features when determining the conditional average treatment effect, as well as the conditional average effects in the left 25\% tail and right 25\% tail. 

\begin{table}[t]
\centering
\captionof{table}{Features of 401(k) dataset.}
\begin{tabular}{l l l } 
 \hline
 Name & Description & Type  \\ [0.5ex] 
 \hline\hline
age & age & continuous  \\
inc & income & continuous \\
educ & years of completed education & continuous \\
fsize & family size & continuous \\
marr & marital status & binary \\
two\_earn & whether dual-earning household & binary\\
db & defined benefit pension status & binary \\
pira & IRA participation & binary \\
hown & home ownership & binary\\
e401 & 401 (k) eligibility & binary \\
net\_tfa & net financial assets & continuous\\
\hline
\end{tabular}
\label{tab:401k_feats}
\end{table}

\begin{figure}[t]
\centering
    \includegraphics[scale=0.4]{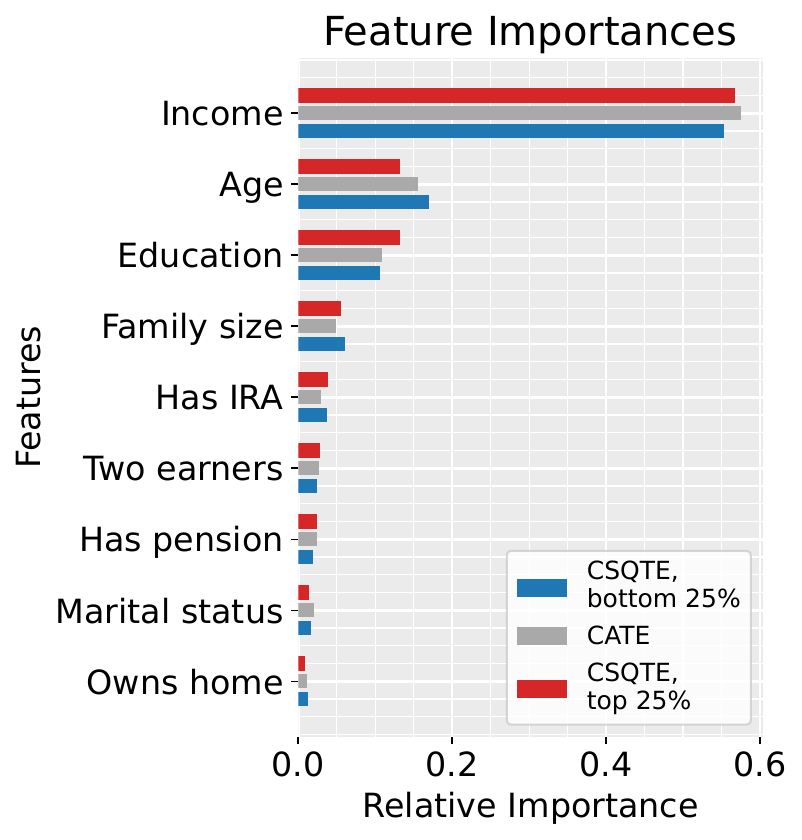}
    \caption{Feature importances given by the final stage algorithm for CSQTE on the bottom 25\% financial asset holders, DR-Learner and CSQTE on the top 25\% financial asset holders}
    \label{fig:app-401k-rf-feature-importance}
    \vspace{-1em}
\end{figure}

\section{PRACTICAL CONSIDERATIONS FOR AND LIMITATIONS OF CDTE ESTIMATION} \label{sec:limitations}

One of the limitations of our work is that the interpretation of our estimands as causal effects only hold when the unconfoundedness assumption holds. Whether it holds or does not, our estimands are differences in distributional statistics between the distributions of $Y\mid X,A=1$ and $Y\mid X,A=0$. If it does hold, this coincides with the differences in distributional statistics between the distributions of $Y(1)\mid X$ and $Y(0)\mid X$. 
While unconfoundedness can usually be guaranteed in experiments barring any non-compliance, there is no way to test for it in observational data. Thus, the onus is on the researcher to determine whether unconfoundedness is plausible in their data and to interpret the estimands carefully. 

Another possible concern in practice is the possibility for biased selection, where the sample may not be representative of the population of interest. To the extent that any unrepresentativeness is explained by $X$ (known as, selection at random), the CDTEs given $X$ will be unaffected. Therefore, we can alleviate this issue in the application of our method by learning CDTEs with a second-stage regression that is a universal approximator (such as nonparametric regression, forests, or deep networks) so it learns the true CDTEs, thus adjusting for $X$ fully.
If there is unrepresentativeness not reflected in $X$ (known as, selection \textit{not} at random), then even the true CDTEs given $X=x$ only reflect effects on the data-generating subpopulation with $X=x$. This may be alleviated by considering richer $X$ so as to minimize the discrepancy and/or by acknowledging possible biases. Finally, whether selection is at random or not, any best-in-class prediction guarantees (such as for simple second-stage regressions like OLS) only hold with respect to the data-generating population, and therefore ``best-performance'' may be unrepresentative of performance in the population of interest. If selection is at random, this is nonetheless easily fixable by reweighting.

Yet another practical concern is whether the outcome we observe truly reflects the appropriate notion of benefit/disbenefit. Otherwise, CDTEs similarly may not reflect the right risk measure, and our estimators could further propagate undesirable biases and choices encoded in the data \citep{passi2019problem}. 

Lastly, there may be issues of calibrating risk profiles to human preferences, so as to make CDTEs informative for decision making.
In particular, there may be conflicting risk preferences between the decision making entity (\eg, doctor, policymaker, product manager) and the individual (\eg, patient, citizen, platform user).
If the risk measure employed is in conflict with individual preferences, it is possible to have a negative impact on individuals from their own perspective.
Nonetheless, using the outputs of our models using different risk measures on an individualized basis could potentially be used to make policy decisions for individuals based on their own risk preferences.

\end{document}